\documentclass{article}
\usepackage{PRIMEarxiv}
\usepackage[utf8]{inputenc}
\usepackage[T1]{fontenc}
\usepackage{hyperref}   
\usepackage{url}        
\usepackage{booktabs}   
\usepackage{amsfonts}   
\usepackage{nicefrac}   
\usepackage{microtype}  
\usepackage{lipsum}
\usepackage{fancyhdr}   
\usepackage{graphicx}   
\title{Identifiability and Order-Dimension Limits of In-Context Learning on Partial Orders}

\author{
  Faizanuddin Ansari \\
  Indian Statistical Institute \\
  Kolkata, India \\ 
  \href{mailto:faizanansari541@gmail.com?subject=[From arXiv] PosetICL Paper}{\texttt{faizanansari541@gmail.com}}\\
   \And
  Debanjan Dutta \\
  Indian Statistical Institute \\
  Kolkata, India \\
   \And
  Swagatam Das \\
  Indian Statistical Institute \\
  Kolkata, India \\
  \href{mailto:swagatam.das@isical.ac.in?subject=[From arXiv] PosetICL Paper}{\texttt{swagatam.das@isical.ac.in}}
}

\usepackage{booktabs}
\usepackage{amsmath,amssymb,amsthm}
\usepackage{natbib, caption, wrapfig, xcolor}

\definecolor{mygreen}{HTML}{2CA02C}
\hypersetup{
        pdfinfo={
            Author={Faizanuddin Ansari},
            Title={Identifiability and Order-Dimension Limits of In-Context Learning on Partial Orders},
            Subject={arXiving 2026 Work},
            Keywords={Poset, ICL},
            CreationDate={09/08/2026--19:56:00},
            ModDate={\the\day/\the\month/\the\year\space},
            Creator={PdfLaTeX v3.14},
            Producer={LaTeX},
        },
        colorlinks=true,
        urlcolor=black,
        citecolor=mygreen
    }

\newtheorem{theorem}{Theorem}
\newtheorem{proposition}[theorem]{Proposition}
\newtheorem{lemma}[theorem]{Lemma}
\newtheorem{observation}[theorem]{Observation}
\newtheorem{corollary}[theorem]{Corollary}
\newtheorem{definition}[theorem]{Definition}

\newcommand{\D}{\mathcal{D}}
\newcommand{\V}{\mathcal{V}}
\newcommand{\B}{\mathcal{B}}

\newcommand{\TC}{\operatorname{TC}}
\newcommand{\Reach}{\operatorname{Reach}}
\newcommand{\dimposet}{\operatorname{dim}}
\newcommand{\height}{\operatorname{height}}
\newcommand{\width}{\operatorname{width}}
\newcommand{\ind}{\mathbf{1}}
\newcommand{\notR}{\not\mathrel{R}}

\usepackage{subfiles}
\makeatletter
\newcommand{\makeappendixtitle}[1][Appendix]{%
  \par
  \begingroup
    \renewcommand{\thefootnote}{\fnsymbol{footnote}}
    \renewcommand{\@makefnmark}{\hbox to \z@{$^{\@thefnmark}$\hss}}
    \long\def\@makefntext##1{%
      \parindent 1em\noindent
      \hbox to 1.8em{\hss $\m@th ^{\@thefnmark}$}##1%
    }
    \thispagestyle{empty}
    \vbox{%
      \hsize\textwidth
      \linewidth\hsize
      \vskip 0.1in
      \@toptitlebar
      \centering
      {\LARGE\sc \@title\par}
      \@bottomtitlebar
      \vskip 0.1in
      \def\And{%
        \end{tabular}\hfil\linebreak[0]\hfil%
        \begin{tabular}[t]{c}\bf\rule{\z@}{24\p@}\ignorespaces%
      }
      \def\AND{%
        \end{tabular}\hfil\linebreak[4]\hfil%
        \begin{tabular}[t]{c}\bf\rule{\z@}{24\p@}\ignorespaces%
      }
      \begin{tabular}[t]{c}\bf\rule{\z@}{24\p@}\@author\end{tabular}%
      \vskip 0.3in \@minus 0.1in \center{} \vskip 0.1in
    }%
    \@thanks
  \endgroup
}
\makeatother

\begin{document}
\maketitle

\begin{abstract}
    In-context learning is commonly formalized as inference from examples of a function. Partial orders instead combine transitivity, antisymmetry, and incomparability, so a finite prompt may not determine a queried comparison. We develop a theory of in-context learning on partial orders that separates logical identifiability, prompt teaching cost, structural complexity, and the exact capacity of a formal coordinate-decoder class. A version-space semantics makes background knowledge and open- versus closed-world assumptions explicit. For finite open-world prompts with positive and negative comparisons, we prove an exact completion trichotomy: after taking the reflexive transitive closure of the positive demonstrations, a query is forced true, forced false because every true completion creates a cycle or violates a negative demonstration, or remains genuinely ambiguous. For a known $n$-element universe, we characterize the open-world teaching number as the number of covers plus a blocker-set hitting number, prove that its maximum over all $n$-element posets is $n(n-1)$ and is uniquely attained by the antichain, and identify the blocker term as the exact cost of open-world rather than complete-Hasse semantics. We formalize prompt-dependent $s$-coordinate decoders and use the classical coordinate-order equivalence to obtain an exact representation boundary: dimension at most $s$ is necessary and sufficient, while width at most $s$ is a convenient sufficient condition. 
\end{abstract}

\section{Introduction}
    Large language models can adapt to a task from demonstrations in the prompt without parameter updates, a capability known as in-context learning (ICL) \cite{brown2020language}. Much of the theory studies prompts sampled from an unknown function and asks the model to predict the function value at a new input \cite{garg2022what,akyurek2023what,dai2023why,guo2024how,bhattamishra2024understanding}. Statistical analyses characterize Bayes-optimal or information-limited ICL under generative assumptions \cite{jeon2024information}. These formulations are valuable, but many reasoning tasks are relational rather than single-valued.

    A partial order $\preceq$ is reflexive, antisymmetric, and transitive and may leave pairs incomparable. Finite posets are represented by Hasse diagrams, and comparability is reachability in the reflexive transitive closure of the cover graph \cite{dushnik1941posets,trotter1992dimension}. Posets therefore expose two difficulties that function-learning abstractions can hide. First, missing evidence is not the same as evidence of incomparability. Second, even when the relation is fully specified, its structure may require several independent ordering coordinates or long transitive certificates.

    A prior empirical study introduced prompts for linear order and divisibility and reported performance saturation on current language models \cite{dutta-etal-2025-assessing}. This study asks what a relational prompt logically determines, how many labels are required to teach a finite poset, and which posets admit exact decoding by a formally specified coordinate-order representation. Our framework includes a background theory $\B$. This matters because a prompt naming ``less than on natural numbers'' may permit semantic recall from pretraining, whereas an abstract relation symbol with only poset axioms permits many completions. We distinguish these cases through the version space \cite{mitchell1977version}. We also separate open-world semantics, where unmentioned relations may hold, from closed-world semantics, where a displayed finite Hasse diagram is declared complete \cite{reiter1978closed}.

    \paragraph{Contributions.}
        (1) We formalize relational ICL using a version space of posets on a fixed known universe, consistent with demonstrations and background knowledge. (2) We prove an exact true/false/unknown completion theorem for finite open-world prompts with positive and negative demonstrations and give its per-query classification cost. (3) We characterize optimal open-world teaching prompts through cover labels and a blocker-set hitting problem, derive exact chain and antichain values, and prove the tight class maximum $n(n-1)$. (4) We organize structural difficulty using height, width, order dimension, and positive and negative certificates, and establish an exact capability boundary for prompt-dependent monotone coordinate decoders.

    \paragraph{Overview.}
        The paper moves from logic to teaching, representation, and certification. We first formalize relational prompts through a fixed universe, a background theory, and a version space, making the distinction between open- and closed-world semantics explicit. We then characterize when a queried comparison is forced true, forced false, or genuinely ambiguous, and illustrate this trichotomy through an exhaustive enumeration of four-element posets. Next, we determine the labels required to teach an entire finite poset and isolate the additional cost created by open-world semantics. Finally, we compare representative poset families, establish the exact capability boundary for coordinate-order decoders, analyze positive and negative certificates, and conclude with the implications, scope, and limitations of the framework.

\section{Related Work and Positioning}
    \paragraph{ICL mechanisms and limits.}
        Function-learning, implicit-optimization, Bayesian, and task-representation accounts of ICL are developed by \citeauthor{garg2022what} (\citeyear{garg2022what}), \citeauthor{akyurek2023what} (\citeyear{akyurek2023what}), \citeauthor{dai2023why} (\citeyear{dai2023why}), \citeauthor{jeon2024information} (\citeyear{jeon2024information}), and \citeauthor{hendel2023task} (\citeyear{hendel2023task}). Recent mechanistic work finds task information in selected attention heads or low-dimensional activation subspaces \cite{yin2025heads}. A contemporaneous 2026 preprint develops a concept-subspace account of structured ICL \cite{tang2026concept}. These findings motivate the explicit prompt-dependent coordinate decoder defined below.

    \paragraph{Relational and graph reasoning.}
        LLMs have been studied on graph problems and relational databases \cite{wang2023graph,wu2025relational}. A contemporaneous 2026 preprint analyzes relational-database ICL through support identifiability and relational label coverage \cite{chen2026openrfm}. Those settings concern database prediction. We instead study logical completion of mathematical partial orders and use order dimension as the representation invariant. Transformer expressivity results establish universality under suitable constructions \cite{qiu2025ask,furuya2025universal}; they do not imply that a finite prompt uniquely identifies its target.

    \paragraph{Order theory.}
        Dushnik--Miller dimension is the minimum number of linear extensions whose intersection is a poset \cite{dushnik1941posets,trotter1992dimension}. The dimension of finite divisibility orders has a developed combinatorial theory \cite{lewis2021divisibility}. We use these classical results rather than claiming them as new; our contribution is their connection to a precisely restricted ICL representation class.
    
\section{Formal Model}    
    Fix a finite known universe $U$. Every target $P=(U,\preceq_P)$ and every hypothesis in the background theory $\B$ has this same ground set.
    
    We represent the labeled relational evidence in a prompt by
    \[
        \D := (\D^+,\D^-), \qquad \D^+,\D^- \subseteq U\times U,
    \]
    where:
    \begin{itemize}
        \item $(x,y)\in\D^+$ is a positive demonstration asserting
        that $x\preceq_P y$; and
        \item $(x,y)\in\D^-$ is a negative demonstration asserting
        that $x\not\preceq_P y$.
    \end{itemize}
    Thus, $\D$ denotes the complete labeled demonstration component of the prompt and does not include the query. We write
    \[
        |\D|:=|\D^+|+|\D^-|
    \]
    for the total number of demonstrations.

    A query $q=(a,b)\in U\times U$ asks whether $a\preceq_P b$. The class $\B$ may additionally encode relation semantics or a completeness assumption. Fixing the common ground set $U$ makes every demonstrated pair and every query well-defined for every hypothesis in $\B$.

    \begin{definition}[Version space and identifiability]
        For a poset $P=(U,\preceq_P)$, let
        \[
        R_P:=\{(x,y)\in U^2:x\preceq_P y\}
        \]
        denote the graph of its order relation. Given a labeled
        demonstration set $\D=(\D^+,\D^-)$, its version space under
        the background theory $\B$ is
        \[
            \begin{aligned}
                \V_{\B}(\D)=\bigl\{P\in\B : {}\;& \D^+\subseteq R_P,\\[-1mm]
                & \D^-\cap R_P=\varnothing\bigr\}.
            \end{aligned}
        \]
        The prompt $\D$ is \emph{satisfiable under $\B$} when
        \[
            \V_{\B}(\D)\neq\varnothing.
        \]
        For a satisfiable prompt, a query $q=(a,b)$ is
        \emph{identified} if
        \[
            \left|\left\{\ind[a\preceq_P b]: P\in\V_{\B}(\D)\right\}\right|=1.
        \]
    \end{definition}

    A binary answer rule is \emph{universally sound} for $(\B,\D,q)$ if it returns the correct answer for every $P\in\V_{\B}(\D)$.
    
    Under \emph{open-world semantics}, $\B$ permits additional comparisons on $U$ beyond those explicitly demonstrated or forced by the partial-order axioms, provided that no negative demonstration is violated. Under \emph{closed-world Hasse semantics}, the displayed finite DAG (Directed acyclic graph) is declared to be the complete Hasse diagram, and $\B$ contains only the poset generated by that
    diagram.

\section{Open-World Identifiability}
    \begin{proposition}[Sound-answer criterion]\label{prop:criterion}
        A universally sound binary answer exists for $(\B,\D,q)$ if and only if $q$ is identified.
    \end{proposition}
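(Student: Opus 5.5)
The plan is to unfold both definitions and compare them directly. The key point is that a binary answer rule, applied to the fixed triple $(\B,\D,q)$, simply outputs a single bit $r\in\{0,1\}$. It cannot depend on the unknown target $P$, because it only sees the prompt and the query. Universal soundness then says exactly that $r=\ind[a\preceq_P b]$ for every $P\in\V_{\B}(\D)$. So the whole argument reduces to relating the set
\[
S:=\left\{\ind[a\preceq_P b]: P\in\V_{\B}(\D)\right\}\subseteq\{0,1\}
\]
to the existence of such an $r$. Throughout I will work, as the identifiability definition does, with a satisfiable prompt, so $S\neq\varnothing$.

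For the ``if'' direction, assume $q$ is identified, so $|S|=1$, say $S=\{r\}$. The rule that outputs $r$ is then correct for every $P\in\V_{\B}(\D)$, since each such $P$ has $\ind[a\preceq_P b]\in S=\{r\}$. Hence it is universally sound.

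For the ``only if'' direction, suppose a universally sound rule outputs $r$. Every $P$ in the version space then satisfies $\ind[a\preceq_P b]=r$, so $S\subseteq\{r\}$. Satisfiability gives $S\neq\varnothing$, hence $S=\{r\}$ and $|S|=1$, which is identification. Equivalently, in contrapositive form: if $|S|=2$, there are $P_0,P_1\in\V_{\B}(\D)$ with $\ind[a\preceq_{P_0}b]=0$ and $\ind[a\preceq_{P_1}b]=1$. Any fixed output is then wrong on one of them.

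The only delicate step is the boundary case, which I expect to be the main obstacle (a matter of convention rather than mathematics). If $\V_{\B}(\D)=\varnothing$, every rule is vacuously universally sound, while identification is defined only for satisfiable prompts. I will state explicitly that the proposition is read under the standing satisfiability hypothesis of the identifiability definition and add a remark covering the unsatisfiable case. I will also make precise, in one sentence, that an answer rule is a function of $(\B,\D,q)$ alone and not of $P$; this is what prevents a trivial ``oracle'' rule from counting as sound.
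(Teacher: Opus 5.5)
Your proposal is correct and follows essentially the same argument as the paper: return the common value when the version space agrees, and otherwise exhibit two consistent posets $P_0,P_1$ on which any fixed bit is wrong, with the nonempty-version-space assumption handled as in the expanded supplementary version. The paper also notes that randomizing the output does not help, because every realized bit is wrong on one of $P_0,P_1$; your contrapositive already gives this when applied to each realized output.
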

    \begin{proof}
        If all consistent posets give value $v$, returning $v$ is sound. If two consistent posets disagree, either binary output is wrong for one of them; randomization cannot guarantee correctness for both.
    \end{proof}
    Supplementary Proposition~S1 gives the fully quantified version, including the randomized-rule case and the nonempty-version-space assumption.
    
    The general criterion is simple but exposes the correct evaluation target. The next theorem gives an exact characterization for arbitrary finite positive and negative comparison prompts under the least restrictive poset background.
    
    For a reflexive relation $R$ and $x\in U$, write
    \[
        \operatorname{Pred}_R(x)=\{u:uRx\},\qquad
        \operatorname{Succ}_R(x)=\{v:xRv\}.
    \]
    
    \begin{theorem}[Open-world completion trichotomy]\label{thm:trichotomy}
        Let $U$ be finite, let $\B$ be the class of all posets on $U$, and let $\D=(\D^+,\D^-)$ be satisfiable. Put $R=\TC(\D^+)$, where $\TC$ denotes reflexive transitive closure. For a query $(a,b)$, exactly one of the following holds:
        \begin{enumerate}
            \item $aRb$, in which case the query is identified true;
            \item $a{\notR}b$ and either $bRa$ or $\bigl(\operatorname{Pred}_R(a)\times\operatorname{Succ}_R(b)\bigr)\cap\D^-\neq\varnothing$,
            in which case the query is identified false;
            \item neither condition holds, in which case the query is unidentifiable.
        \end{enumerate}
    \end{theorem}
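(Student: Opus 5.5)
The plan is to prove the three cases in order, using one structural fact throughout. Because $\D$ is satisfiable, some poset $P_0\in\V_{\B}(\D)$ exists. Every $P\in\V_{\B}(\D)$ has a reflexive and transitive $R_P$ containing $\D^+$, so $R\subseteq R_P$. In particular $R\subseteq R_{P_0}$, and since $R_{P_0}$ is antisymmetric, $R$ is itself a partial order on $U$. In addition, $R\cap\D^-=\varnothing$. The three conditions are mutually exclusive and exhaustive by construction, since case 2 requires $a\notR b$ and case 3 is defined as the complement of cases 1 and 2. It therefore suffices to establish the conclusion attached to each case.

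\emph{Case 1.} If $aRb$, then $R\subseteq R_P$ gives $a\preceq_P b$ for every consistent $P$, so the query is identified true.

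\emph{Case 2.} Suppose $a\notR b$ and $bRa$. Then $b\preceq_P a$ in every consistent $P$. Since $a\neq b$ (reflexivity of $R$ rules out $a=b$ when $a\notR b$), antisymmetry of $\preceq_P$ forbids $a\preceq_P b$. Now suppose instead that $(u,v)\in\D^-$ with $uRa$ and $bRv$. If some consistent $P$ had $a\preceq_P b$, transitivity would give $u\preceq_P v$, which contradicts $(u,v)\in\D^-$. In both subcases every consistent $P$ answers false. Satisfiability guarantees that at least one such $P$ exists, so the query is identified false.

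\emph{Case 3.} This is the main step: we must exhibit two consistent posets that disagree. The poset $R$ itself lies in $\V_{\B}(\D)$ and has $a\notR b$, so it witnesses the answer false. For the answer true, let $R'$ be the transitive closure of $R\cup\{(a,b)\}$. Since $R$ is already transitive, we can write it explicitly as
\[
R' = R\cup\bigl(\operatorname{Pred}_R(a)\times\operatorname{Succ}_R(b)\bigr).
\]
We must check three things:
\begin{itemize}
\item[(i)] $R'$ is transitive. Compose two new pairs, or a new pair with an old one, and check that the result stays in $R'$; this uses the transitivity of $R$.
\item[(ii)] $R'$ is antisymmetric. A violation would require some $(u,v)$ new with $(v,u)\in R'$. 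If $(v,u)\in R$, then $uRa$, $bRv$, $vRu$ give $bRa$, which is excluded. If $(v,u)$ is also new, then $vRa$ and $bRu$; chaining $bRu$, $uRa$ again gives $bRa$, excluded.
\item[(iii)] $R'\cap\D^-=\varnothing$. This holds exactly because $R\cap\D^-=\varnothing$ and the excluded intersection condition of case 2 fails.
\end{itemize}
Hence $R'$ is a poset in $\V_{\B}(\D)$ with $aR'b$, and the query is unidentifiable. This also shows that case 2 is not merely sufficient for identified-false but exactly characterizes it once $a\notR b$.

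The expected obstacle is the closure computation in case 3: verifying that adding the single pair $(a,b)$ closes in one step to $\operatorname{Pred}_R(a)\times\operatorname{Succ}_R(b)$, and handling the antisymmetry check cleanly. I will do this by direct case analysis on whether each of the two pairs involved is old or new, as sketched in (i) and (ii).
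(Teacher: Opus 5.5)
Your proof is correct and takes essentially the same route as the paper. In both, $R=\TC(\D^+)$ is itself a consistent poset and serves as the false witness; case 2 follows from antisymmetry and transitivity; and in case 3 the extension $R\cup\bigl(\operatorname{Pred}_R(a)\times\operatorname{Succ}_R(b)\bigr)$ is checked to be a transitive, antisymmetric, $\D^-$-avoiding true witness. The differences are cosmetic: you argue case 2 directly inside an arbitrary consistent $P$ rather than through the closure formula, and you check antisymmetry by a pairwise old/new case split instead of the paper's no-new-cycle argument.
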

    \begin{proof}
        Satisfiability gives a poset containing $\D^+$ and excluding $\D^-$. Therefore $R$ is antisymmetric and $R\cap\D^-=\varnothing$; hence $P^-=(U,R)$ is itself a consistent poset. This disjointness is why, after adding $(a,b)$, it suffices to test only the newly forced predecessor--successor rectangle against $\D^-$. If $aRb$, every poset containing $\D^+$ contains $R$, so every completion makes the query true.
        
        Assume now that $a{\notR}b$. The poset $P^-$ is a consistent completion in which the query is false. It remains to decide whether a true completion exists. Any poset containing $R$ and $a\preceq b$ must also contain every pair $(x,y)$ with $xRa$ and $bRy$, by transitivity. Conversely, the reflexive transitive closure after adding $(a,b)$ is exactly
        \[
            R_{a,b}=R\cup\bigl(\operatorname{Pred}_R(a)\times\operatorname{Succ}_R(b)\bigr).
        \]
        The displayed rectangle is forced by transitivity. For the reverse inclusion, call the right-hand relation $R'$. It contains $R$ and $(a,b)$ and is transitive: composing an $R$-pair with a rectangle pair, or a rectangle pair with an $R$-pair, stays in the rectangle; composing two rectangle pairs does as well. Hence the least reflexive transitive relation containing $R\cup\{(a,b)\}$ is contained in $R'$, proving equality.
        
        If $bRa$, then $R_{a,b}$ contains both $aRb$ and $bRa$ for distinct $a,b$, so no antisymmetric true completion exists. If the displayed cross-product contains a negative demonstration, every true completion violates that demonstration. Thus either condition in item 2 forces false.
        
        Suppose neither obstruction holds. Since $b{\notR}a$, adding $(a,b)$ creates no directed cycle, so $R_{a,b}$ is antisymmetric. By assumption it also avoids every pair in $\D^-$. Therefore $P^+=(U,R_{a,b})$ is a consistent poset in which the query is true, while $P^-$ is a consistent poset in which it is false. The query is unidentifiable. These alternatives are mutually exclusive and exhaustive.
    \end{proof}
    Lemmas~S2--S3 (Appendix) isolate the least-closure and one-edge-closure arguments, and Theorem~S4 (Appendix) restates the trichotomy with a fully modular proof.

    \paragraph{Example.}
        Let $U=\{a,b,c\}$, let $\D^+=\{(a,b)\}$, and let $\D^-=\{(a,c)\}$. The query $a\preceq b$ is forced true. For $b\preceq c$, adding $(b,c)$ forces $(a,c)$ by transitivity, contradicting the negative label, so the query is forced false even though $c\preceq b$ is not known. By contrast, $c\preceq b$ is ambiguous: both the least closure and its extension by $(c,b)$ satisfy the prompt. This illustrates all three outcomes without invoking model behavior.

    \begin{corollary}[Positive-only trichotomy]\label{cor:positive}
        If $\D^-=\varnothing$, a query is identified true when $aRb$, identified false when $a\neq b$ and $bRa$, and otherwise unidentifiable.
    \end{corollary}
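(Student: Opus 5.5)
The plan is to specialize Theorem~\ref{thm:trichotomy} to $\D^-=\varnothing$. First, satisfiability is automatic in this case. The relation $R=\TC(\D^+)$ is reflexive and transitive, but it need not be antisymmetric, so I will not assume it. Satisfiability is already a hypothesis of Theorem~\ref{thm:trichotomy}, and the corollary inherits it. I will read the corollary as stated under that standing assumption. In that case $R$ is contained in every consistent poset, and hence it is antisymmetric.

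Next, I will simplify item~2 of the theorem. With $\D^-=\varnothing$, the rectangle condition $\bigl(\operatorname{Pred}_R(a)\times\operatorname{Succ}_R(b)\bigr)\cap\D^-\neq\varnothing$ can never hold. So item~2 reduces to the condition that $a\notR b$ and $bRa$.

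It remains to check that this matches the corollary's phrasing ``$a\neq b$ and $bRa$'' in the cases where $aRb$ fails. If $a=b$, then $aRb$ holds by reflexivity, so we are in item~1 (identified true). This is consistent with the corollary, because its first clause is tested first. If $a\neq b$ and $bRa$, then antisymmetry of $R$ gives $a\notR b$, so item~2 applies and the query is identified false. Conversely, item~2 with empty $\D^-$ requires $bRa$ and $a\notR b$, and the latter forces $a\neq b$. Hence the two descriptions coincide.

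In every remaining case, item~3 of Theorem~\ref{thm:trichotomy} gives unidentifiability. Mutual exclusivity and exhaustiveness are inherited directly from the theorem.

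The only delicate point is this bookkeeping around $a=b$ and antisymmetry, which is where satisfiability is used. Everything else is immediate substitution into the theorem.
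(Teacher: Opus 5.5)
Your proposal is correct and follows the paper, which obtains the corollary by specializing Theorem~\ref{thm:trichotomy} to $\D^-=\varnothing$ so that the rectangle condition becomes vacuous; your handling of $a=b$ via reflexivity, and your use of antisymmetry of $R$ (from satisfiability) to get $a\neq b,\ bRa \Rightarrow a\notR b$, is exactly the bookkeeping that reduction needs. Drop your opening remark that satisfiability is automatic here: it is false (e.g.\ $\D^+=\{(a,b),(b,a)\}$ with $a\neq b$ has empty version space), and your own next sentences already correctly treat satisfiability as an inherited hypothesis.
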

    
    Theorem~\ref{thm:trichotomy} is also an algorithm. Compute $R$ once in $O(|U|^3)$ time by a straightforward transitive-closure method. For each query, test $aRb$ and $bRa$, then scan each $(x,y)\in\D^-$ and test $xRa$ and $bRy$; this costs $O(|\D^-|)$ per query after closure preprocessing. With $|U|$-bit predecessor, successor, and negative-adjacency bitsets, the rectangle test costs $O(|U|^2/w)$ word operations per query, where $w$ is the machine word size. A query between two distinct elements absent from all demonstrations falls into the ambiguous case unless background knowledge relates them.
    
    \begin{wrapfigure}[15]{r}{0.5\linewidth}
    \centering
    \vspace{-1.5em}
        \includegraphics[width=0.85\linewidth]{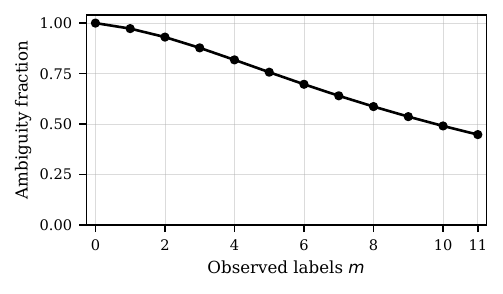}
        \caption{Exact ambiguity fraction over all 219 labeled four-element posets, averaged uniformly over targets, observed $m$-subsets of the 12 nonreflexive ordered pairs, and unobserved queries. Observed labels are target-consistent; the calculation is exhaustive.}
    \label{fig:ambiguity-n4}
    \end{wrapfigure}
    
    \paragraph{Deterministic finite-universe illustration.}
        The trichotomy also permits a model-free prevalence calculation. We exhaustively enumerate all 219 labeled posets on four elements. For each $m\in\{0,\ldots,11\}$, we average uniformly over $m$-subsets of the 12 ordered nonreflexive pairs and over unobserved queries, using target-consistent labels. Figure~\ref{fig:ambiguity-n4} shows that ambiguity decreases with coverage but remains $0.4475$ when 11 of 12 labels are observed. This finite averaging scheme is not a model evaluation or a universal poset distribution; the supplement gives the exact values, and the accompanying code package reproduces them.
    
    The next elementary statement is the standard reachability interpretation of a complete Hasse diagram \cite{trotter1992dimension}; we record it to contrast closed- and open-world semantics.
    \begin{observation}[Closed-world reachability]\label{prop:reach}
    If a finite DAG $H=(U,E)$ is declared to be the complete Hasse diagram, then $a\preceq b$ if and only if $b\in\Reach_H(a)$, including the length-zero path when $a=b$.
    \end{observation}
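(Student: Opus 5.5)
The plan is to unpack what ``declared to be the complete Hasse diagram'' means, and then identify the generated poset with the reflexive transitive closure of $E$. Under closed-world Hasse semantics, $\B$ contains exactly one poset $P_H=(U,\preceq_H)$: the poset generated by $H$. I take ``generated'' to mean that $R_{P_H}=\TC(E)$, the least reflexive transitive relation containing $E$. So the statement reduces to two facts. First, $\TC(E)$ is a partial order. Second, $(a,b)\in\TC(E)$ if and only if there is a directed path from $a$ to $b$ in $H$, with the length-zero path allowed when $a=b$.

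For the second fact, define $Q=\{(a,b): b\in\Reach_H(a)\}$ and show $Q=\TC(E)$ by two inclusions. For $Q\subseteq\TC(E)$, induct on path length. Length zero gives $(a,a)$, which holds by reflexivity. Extending a path by one edge $(c,b)\in E$ uses transitivity of $\TC(E)$. For $\TC(E)\subseteq Q$, note three things about $Q$:
\begin{itemize}
\item $Q$ is reflexive, since the empty path is allowed;
\item $Q$ contains $E$, via length-one paths;
\item $Q$ is transitive, since paths can be concatenated.
\end{itemize}
Minimality of $\TC(E)$ then gives the inclusion. This is the same least-closure reasoning used in the proof of Theorem~\ref{thm:trichotomy}, and it can be cited from the supplementary least-closure lemma.

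For the first fact, only antisymmetry needs checking. Suppose $aQb$ and $bQa$ with $a\neq b$. Concatenating the two paths gives a closed walk of positive length in $H$. That walk contains a directed cycle, which contradicts acyclicity. Hence $P_H$ is a poset, $\B=\{P_H\}$ is well defined, and every query is identified with answer $\ind[b\in\Reach_H(a)]$.

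The main obstacle is definitional rather than technical: I need to pin down that ``complete Hasse diagram'' fixes the order relation as the closure of $E$. That is, no further comparabilities are allowed, which is the closed-world assumption. I am not requiring $E$ to be exactly the cover relation. If one also wants $E$ to be the cover relation of $P_H$, that is a separate transitive-reduction check and the reachability claim does not need it. I would make this remark explicitly so the contrast with the open-world version space is clear.
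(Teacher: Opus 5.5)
Your argument is correct under the reading you adopt, and that reading matches the paper's formal model (``$\B$ contains only the poset generated by that diagram''). It does, however, run in the opposite direction from the paper's proof.

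The paper treats $H$ as the cover graph of an already-given finite poset $P$. A directed path is then a chain of covers, so it gives a comparability by transitivity. Conversely, a strict comparability $a\prec b$ that is not a cover is refined through some $c$ with $a\prec c\prec b$ until only covers remain. Termination holds because the finite intervals strictly shrink.

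You instead define the closed-world order to be $\TC(E)$. The direction ``comparable implies reachable'' then comes free from minimality of the closure, and the only real work is antisymmetry, which you get from acyclicity.

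Your route buys two things. It is more general, since it works for any finite DAG, transitively reduced or not. It also checks that the closed-world version space is well defined, which the paper simply presupposes.

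The paper's route buys the order-theoretic fact $\preceq_P=\TC(\operatorname{Cov}(P))$ for every finite poset. This is where finiteness genuinely matters: a dense order such as $\mathbb{Q}$ has no covers at all. The paper reuses this fact later, in ``the covers generate all of $P$'' in Theorem~\ref{thm:teaching-exact} and in the closed-world sufficiency of covers in the price-of-open-world paragraph.

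Your proposal does not establish that fact. Your closing remark about transitive reduction concerns the other direction, namely whether $E$ equals the covers of $\TC(E)$. If the statement is read as the supplement phrases it, with $\preceq$ the order of a given poset whose Hasse diagram is $H$, you need one more step. The inclusion $\TC(E)\subseteq\preceq_P$ is immediate because covers are comparabilities. The inclusion $\preceq_P\subseteq\TC(E)$ is exactly the saturated-chain refinement, and you would have to add it.
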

    \begin{proof}
    A directed path is a chain of cover relations and implies comparability by transitivity. Conversely, let $a\prec b$. If the pair is not a cover, choose $c$ with $a\prec c\prec b$ and refine both intervals. Finiteness makes the refinement terminate in a saturated chain of covers, hence a directed Hasse path. Reflexivity handles $a=b$.
    \end{proof}
    Supplementary Observation~S5 gives the expanded saturated-chain argument and states the required completeness assumption explicitly.
    
    These statements depend on $\B$. If $\B$ fixes the standard numerical order, a query may be identified from background semantics even when demonstrations alone do not identify it. Therefore a benchmark should state whether it measures prompt-only induction, use of pretrained semantic knowledge, or both. Supplementary Proposition~S6 formalizes the monotonicity of identifiability and teaching cost under stronger background knowledge.
    
\section{Prompt Teaching Complexity}
    Identifiability can also be studied as a sample-complexity question. Let $\B_U$ be the class of all posets on the known finite universe $U$. A labeled demonstration set $\D$ \emph{teaches} $P$ under open-world semantics if $\V_{\B_U}(\D)=\{P\}$. Define
    \[
        \tau_{\mathrm{ow}}(P)=\min\{|\D^+|+|\D^-|:\V_{\B_U}(\D)=\{P\}\}.
    \]
    The known ground set is essential: if arbitrary fresh elements are allowed, no finite prompt can isolate an antichain because a new isolated element may always be added. This definition specializes teaching dimension to relation-valued concepts \cite{goldman1995teaching}. Let $\operatorname{Cov}(P)$ be the strict cover relation. For an \emph{ordered} incomparable pair $a\parallel_P b$, define its blocker set
    \[
        B_P(a,b)=\bigl(\operatorname{Pred}_P(a)\times\operatorname{Succ}_P(b)\bigr) \setminus\preceq_P .
    \]
    Let $\beta(P)$ be the minimum size of a set $N\subseteq U^2\setminus\preceq_P$ that intersects $B_P(a,b)$ for every ordered incomparable pair $(a,b)$. The orientations $(a,b)$ and $(b,a)$ are distinct constraints; this convention is essential for the antichain value below.
    
    \begin{theorem}[Exact teaching characterization]\label{thm:teaching-exact}
        For every finite poset $P$,
        \[
        \tau_{\mathrm{ow}}(P)=|\operatorname{Cov}(P)|+\beta(P).
        \]
    \end{theorem}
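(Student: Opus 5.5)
We prove matching upper and lower bounds, using Theorem~\ref{thm:trichotomy} to decide exactly when a prompt $\D$ teaches $P$. The main observation is that $\V_{\B_U}(\D)=\{P\}$ holds if and only if every query on $U$ is identified and agrees with $P$. This is because a poset on $U$ is determined by its truth values on all ordered pairs.

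For the upper bound, let $\D^+=\operatorname{Cov}(P)$ and let $\D^-=N$, where $N$ is a minimum blocker hitting set; these sets are disjoint since $N\cap\preceq_P=\varnothing$. Every pair in $\D$ agrees with $P$, so $P$ lies in the version space. By Observation~\ref{prop:reach}, $R=\TC(\operatorname{Cov}(P))$ equals $\preceq_P$. Hence every comparable query $a\preceq_P b$ is identified true by item~1 of Theorem~\ref{thm:trichotomy}. Now take a query $(a,b)$ with $a\not\preceq_P b$. If $b\prec_P a$, then $bRa$ and item~2 makes the query false. Otherwise $a\parallel_P b$, and $N$ meets $B_P(a,b)\subseteq \operatorname{Pred}_R(a)\times\operatorname{Succ}_R(b)$, so item~2 again makes it false. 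Every query is therefore identified with $P$'s value, and $P$ is the only poset in the version space. This gives $\tau_{\mathrm{ow}}(P)\le|\operatorname{Cov}(P)|+\beta(P)$.

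For the lower bound, let $\D$ teach $P$. We first show $\operatorname{Cov}(P)\subseteq\D^+$. Suppose a cover $(x,y)$ were missing from $\D^+$. All pairs of $\D^+$ hold in $P$, and any positive path from $x$ to $y$ would have to pass through an element strictly between $x$ and $y$, which a cover forbids. Hence $x\not\mathrel{R} y$ where $R=\TC(\D^+)$. The poset $(U,R)$ is consistent with $\D$, as in the proof of Theorem~\ref{thm:trichotomy}, and it differs from $P$. This contradicts uniqueness of the version space, so $|\D^+|\ge|\operatorname{Cov}(P)|$.

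Next we show $\D^-$ is a blocker hitting set. Since $P$ is consistent, $\D^-\subseteq U^2\setminus\preceq_P$. Because $\D^+\subseteq\preceq_P$ and covers generate $P$, we have $R=\preceq_P$. Fix an incomparable pair $a\parallel_P b$. It must be identified false. Item~1 fails, and $bRa$ fails, so item~2 forces $\D^-\cap(\operatorname{Pred}_P(a)\times\operatorname{Succ}_P(b))\ne\varnothing$. Since $\D^-$ avoids $\preceq_P$, this intersection lies in $B_P(a,b)$. Thus $|\D^-|\ge\beta(P)$, and adding the two bounds completes the proof.

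The main obstacle is the cover-necessity step. The point to check carefully is that $R=\preceq_P$ holds once covers are included, and that a missing cover cannot be recovered through transitivity even though $\D^+$ may contain extra non-cover pairs. The argument above handles this because every element of a positive path must lie in the interval $[x,y]$ of $P$.
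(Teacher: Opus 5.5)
Your proof is correct. It reaches the same two necessary conditions and the same sufficiency argument as the paper, but by a more modular route.

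The paper argues directly with explicit competitor posets:
\begin{itemize}
\item for cover necessity, it deletes a single omitted cover from $P$ and checks that $P\setminus\{(x,y)\}$ is still transitive (Lemma~S7);
\item for blocker necessity, it adds the rectangle $\operatorname{Pred}_P(a)\times\operatorname{Succ}_P(b)$ to $P$ when $N$ misses a blocker set;
\item for sufficiency, it takes any consistent $Q\supseteq P$ and derives a contradiction from a pair in $Q\setminus P$.
\end{itemize}

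You instead reduce teaching to ``every query is identified with $P$'s value'' and read both sufficiency and blocker necessity off Theorem~\ref{thm:trichotomy}. The trichotomy's proof is built from exactly the one-edge closure $R_{a,b}$ and the rectangle-forcing argument, so those two parts are the paper's argument repackaged.

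The genuinely different step is cover necessity. Your competitor is the least completion $(U,\TC(\D^+))$, whose consistency is inherited from satisfiability. So instead of a transitivity check you need only the interval argument, which is sound: every vertex of a positive path lies in $[x,y]=\{x,y\}$ and $(y,x)\notin\D^+$, so the path would have to use the omitted pair. This makes explicit that the least completion always lies in the version space, so the positive part of any teaching set must generate $P$.

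The price is an ordering dependency. Your blocker step needs $\TC(\D^+)=\preceq_P$ and hence must come after cover necessity. The paper's blocker argument works with $P$ itself and does not depend on which positive labels were shown.
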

    \begin{proof}
        Every cover $(x,y)$ must be a positive demonstration. Otherwise deleting only $x\preceq y$ leaves a poset: a transitivity violation would require an intermediate $x\prec z\prec y$, contradicting coverhood. The reduced poset agrees with every other valid label, so the prompt would not teach $P$.
    
        Let $N$ be the negative labels of a teaching set. If $N$ misses $B_P(a,b)$ for some $a\parallel_P b$, then adding $a\preceq b$ and closing transitively gives
        $P\cup(\operatorname{Pred}_P(a)\times\operatorname{Succ}_P(b))$. It is antisymmetric because $b\not\preceq_P a$ and it avoids $N$ by assumption, producing a second consistent poset. Thus $N$ must be a hitting set and $|N|\geq\beta(P)$.
    
        Conversely, label every cover positive and choose a blocker hitting set $N$ of size $\beta(P)$ as negative. The covers generate all of $P$. Any different consistent extension must add some ordered pair $(a,b)\notin P$. The reverse pair cannot lie in $P$ by antisymmetry, so $a\parallel_P b$. Transitivity then forces every pair in $B_P(a,b)$, including a member of $N$, a contradiction. Hence the prompt teaches $P$.
    \end{proof}
    Supplementary Lemma~S7 proves cover necessity separately, while Supplementary Theorem~S8 gives the complete lower- and upper-bound proof with the one-edge extension justified through Supplementary  Lemma~S3.
    
    \begin{corollary}[General bounds and exact extremal values]\label{cor:teaching}
    If $I(P)$ is the number of unordered incomparable pairs, then
    \[
    |\operatorname{Cov}(P)|\leq\tau_{\mathrm{ow}}(P)
    \leq|\operatorname{Cov}(P)|+2I(P).
    \]
    For the $n$-element chain $C_n$ and antichain $A_n$,
    \[
    \tau_{\mathrm{ow}}(C_n)=n-1,\qquad
    \tau_{\mathrm{ow}}(A_n)=n(n-1).
    \]
    Moreover, among all posets on an $n$-element universe,
    \[
    \max_P\tau_{\mathrm{ow}}(P)=n(n-1),
    \]
    and equality is attained uniquely by the antichain.
    \end{corollary}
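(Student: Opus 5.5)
Everything will follow from Theorem~\ref{thm:teaching-exact}, which gives $\tau_{\mathrm{ow}}(P)=|\operatorname{Cov}(P)|+\beta(P)$. The lower bound is immediate because $\beta(P)\ge 0$. For the upper bound, I would note that each ordered incomparable pair $(a,b)$ has $(a,b)\in B_P(a,b)$. Indeed $a\in\operatorname{Pred}_P(a)$, $b\in\operatorname{Succ}_P(b)$ and $a\not\preceq_P b$. So the set of all ordered incomparable pairs is a hitting set, it lies in $U^2\setminus\preceq_P$, and it has size $2I(P)$. Hence $\beta(P)\le 2I(P)$.

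For the chain, $I(C_n)=0$, so the two bounds coincide at $|\operatorname{Cov}(C_n)|=n-1$. For the antichain, $\operatorname{Cov}=\varnothing$, $\operatorname{Pred}(a)=\{a\}$ and $\operatorname{Succ}(b)=\{b\}$, so $B_{A_n}(a,b)=\{(a,b)\}$. Every one of the $n(n-1)$ ordered pairs of distinct elements must therefore be hit by itself, giving $\beta(A_n)=n(n-1)$. This is why the orientation convention matters: the two orientations are separate singleton constraints.

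For the maximum, I would bound $|\operatorname{Cov}(P)|+2I(P)$ in terms of $n$. Let $C(P)$ be the number of unordered comparable pairs of distinct elements, so $C(P)+I(P)=\binom n2$. Each cover is a comparable pair, so $|\operatorname{Cov}(P)|\le C(P)$. Then
\[
\tau_{\mathrm{ow}}(P)\le C(P)+2I(P)=\binom n2+I(P)\le 2\binom n2=n(n-1),
\]
with the last inequality using $I(P)\le\binom n2$. Equality forces $I(P)=\binom n2$, which means no comparable pairs, i.e.\ $P=A_n$. The bound is attained there by the computation above, so the antichain is the unique maximizer.

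I expect no serious obstacle. The only delicate points are checking that $(a,b)$ genuinely lies in its own blocker set, so that the upper bound $2I(P)$ is valid, and tracking where equality can hold in the chain of inequalities to get uniqueness. Strictness is automatic: for $P\ne A_n$ we have $I(P)<\binom n2$, which already makes the final inequality strict.
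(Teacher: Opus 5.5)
Your proposal is correct and follows the paper's proof essentially step for step. It uses the same three ingredients: the hitting set of all $2I(P)$ ordered incomparable pairs for the upper bound, the singleton blocker sets $B_{A_n}(a,b)=\{(a,b)\}$ for the antichain, and the chain $\tau_{\mathrm{ow}}(P)\le|\operatorname{Cov}(P)|+2I(P)\le\binom{n}{2}+I(P)\le n(n-1)$, where equality forces $I(P)=\binom{n}{2}$, giving the class maximum and its uniqueness.
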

    \begin{proof}
    Each blocker set contains its defining ordered pair $(a,b)$, so selecting all ordered incomparable pairs is a hitting set of size $2I(P)$. A chain has $n-1$ covers and no incomparable pairs. In an antichain, $B_P(a,b)=\{(a,b)\}$ for every distinct ordered pair, so all $n(n-1)$ negative labels are required.
    
    For the class maximum, the number of unordered comparable pairs is $\binom{n}{2}-I(P)$, and every cover is such a pair. Hence
    \[
        \tau_{\mathrm{ow}}(P)\leq |\operatorname{Cov}(P)|+2I(P)
        \leq \binom{n}{2}+I(P)\leq n(n-1).
    \]
    Equality in the final inequality requires $I(P)=\binom{n}{2}$, so every distinct pair is incomparable and $P$ is the antichain. The antichain attains the bound by the preceding calculation.
    \end{proof}
    Supplementary Corollary~S9 records the edge cases, the class-maximum argument, and the following open- versus closed-world comparison.

    \paragraph{Price of open-world semantics.}
        If a prompt is declared to be the complete Hasse diagram, let $\tau_{\mathrm{cw}}(P)$ be the minimum number of displayed cover edges needed to specify $P$. Every cover is necessary and all covers are sufficient, so
        \[
        \tau_{\mathrm{cw}}(P)=|\operatorname{Cov}(P)|,
        \qquad
        \tau_{\mathrm{ow}}(P)-\tau_{\mathrm{cw}}(P)=\beta(P).
        \]
        Thus $\beta(P)$ is exactly the label cost of open-world semantics.
    
    Theorem~\ref{thm:teaching-exact} turns optimal prompt design into a structured hitting-set problem. Minimum hitting set is NP-hard in general; we do not establish the complexity of the blocker sets induced specifically by posets. The theorem also shows why order dimension alone does not determine teaching cost: a nontrivial antichain has dimension two but quadratic teaching number, whereas a chain has dimension one and linear teaching number. Background theory can shrink both quantities, which is why $\B$ must be reported.

\section{Structural Complexity}
    For a finite prompt--query instance with target poset $P$ and query $q=(a,b)$, we use the profile
    \[
        \begin{aligned}
            \mathsf{Comp}(P,q)=\bigl(&\height(P),\width(P),\dimposet(P),\lambda_P^+(q),\nu_P^-(q),\eta_{\B}(\D,q)\bigr).
        \end{aligned}
    \]
    This is a taxonomy rather than a scalar ordering. Here $\eta_{\B}$ is zero when the query is identified and one otherwise. If $a\preceq_P b$, then $\lambda_P^+(a,b)$ is the shortest Hasse-path length and we set $\nu_P^-(a,b)=\infty$. If $a\not\preceq_P b$, define
    $\nu_P^-(a,b)=|\Reach_{H(P)}(a)|$ and set $\lambda_P^+(a,b)=\infty$; Lemma~\ref{thm:negative} shows that this reachable set is the canonical minimum forward-closed witness. Height and positive path length measure chain depth; width measures incomparability; order dimension measures the number of linear orders needed to realize all comparisons.
    
    For a family-level view, let $L^+(P)=\max_{a\preceq b}\lambda_P^+(a,b)$ and $N^-(P)=\max_{a\not\preceq b}\nu_P^-(a,b)$. Table~\ref{tab:profiles} uses these worst-case query coordinates. For $N=\prod_{i=1}^r p_i^{e_i}$, write $E=\sum_i e_i$ and $W_N=\width(\operatorname{Div}(N))$; for $D_n=([n],\mid)$, let $r(n)=\max\{r:p_1\cdots p_r\leq n\}$.
    
    \begin{table*}[t]
    \centering
    \scriptsize
    \setlength{\tabcolsep}{4pt}
        \begin{tabular}{lcccccc}
            \toprule
            Family & $\height$ & $\width$ & $\dimposet$ & $|\operatorname{Cov}|$ & $L^+$ & $N^-$ \\
            \midrule
            $C_n$ & $n$ & $1$ & $1$ & $n-1$ & $n-1$ & $n-1$ \\
            $B_m$ & $m+1$ & $\binom{m}{\lfloor m/2\rfloor}$ & $m$ & $m2^{m-1}$ & $m$ & $2^{m-1}$ \\
            $\operatorname{Div}(N)$ & $E+1$ & $W_N$ & $r$ & $\sum_i e_i\!\prod_{j\ne i}(e_j+1)$ & $E$ & $\max_i e_i\!\prod_{j\ne i}(e_j+1)$ \\
            $D_n=([n],\mid)$ & $\lfloor\log_2 n\rfloor+1$ & $\geq\lceil n/2\rceil$ & $\geq r(n)$ & $\sum_{p\leq n}\lfloor n/p\rfloor$ & $\lfloor\log_2 n\rfloor$ & $\lfloor n/2\rfloor$ \\
            \bottomrule
        \end{tabular}
        \caption{Structural profiles for $n\geq2$. Entries are exact except the displayed lower bounds for $D_n$; $W_N$ denotes the width of the fixed divisor lattice. The table makes visible that cover count, width, dimension, and positive/negative certificate sizes are independent axes rather than a single notion of difficulty.}
        \label{tab:profiles}
    \end{table*}
    
    For example, a chain has dimension one but can have an arbitrarily long positive certificate, whereas a cover query in a Boolean lattice has certificate length one even though the family has growing width and dimension.
    
    \begin{proposition}[Standard families]\label{prop:families}
        For $n,m,r\geq 1$: (i) a chain $C_n$ has dimension $1$; (ii) the Boolean lattice $B_m=(2^{[m]},\subseteq)$ has dimension $m$; and (iii) if $N=\prod_{i=1}^r p_i^{e_i}$ with distinct primes and $e_i\geq1$, then the poset of positive divisors of $N$ under divisibility has dimension $r$.
    \end{proposition}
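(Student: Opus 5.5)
For all three parts we combine an upper bound, obtained by explicitly realizing the poset as an intersection of linear extensions, with a lower bound. The lower bound either comes from a standard embedded subposet or from the elementary fact that a poset which is not a chain needs at least two extensions. Recall that $\dimposet(P)$ is the least $t$ such that $P$ embeds in a product of $t$ chains, equivalently is the intersection of $t$ linear extensions. Part (i) is immediate: $C_n$ is itself a linear extension, so $\dimposet(C_n)\le1$, and every poset has dimension at least $1$.

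For (ii), we first get the upper bound from the embedding $S\mapsto(\ind[1\in S],\dots,\ind[m\in S])$. This is an order embedding of $B_m$ into the product $\{0,1\}^m$ of $m$ two-element chains, since $S\subseteq T$ holds iff the embedding is coordinatewise $\le$. Hence $\dimposet(B_m)\le m$.

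For the lower bound we use the standard example. Consider the singletons $a_i=\{i\}$ and the co-singletons $b_i=[m]\setminus\{i\}$. Then $a_i\subseteq b_j$ iff $i\neq j$, and $a_i\not\subseteq b_i$. Suppose $L_1,\dots,L_t$ are linear extensions whose intersection is $B_m$. For each $i$, some $L_k$ must place $b_i$ below $a_i$. If a single $L_k$ reversed two pairs $i\neq j$, we would have $b_i<a_i\le b_j<a_j\le b_i$ in $L_k$, using $a_i\subseteq b_j$ and $a_j\subseteq b_i$. This is a cycle, which is impossible. So each extension reverses at most one pair, and $t\ge m$. For $m=1$ the statement is just (i).

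For (iii), the divisor poset of $N$ is isomorphic to the product of chains $\prod_i\{0,\dots,e_i\}$ via the exponent vector, which gives $\dimposet\le r$. For the lower bound, we embed $B_r$ as a subposet by mapping $S\mapsto\prod_{i\in S}p_i$. This map is an order embedding because $e_i\ge1$. Dimension is monotone under taking subposets, since restricting linear extensions to the subset still realizes the induced order. Therefore $\dimposet\ge\dimposet(B_r)=r$ by (ii).

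The only genuinely nontrivial step is the lower bound in (ii). The cycle argument on the standard example handles it, and the remaining steps are routine embeddings.
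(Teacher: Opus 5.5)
Your proof is correct and follows the paper's argument essentially step for step. Both use coordinate (product-of-chains) upper bounds from membership indicators and exponent vectors, the standard example $a_i=\{i\}$, $b_i=[m]\setminus\{i\}$ with the cycle argument for the lower bound on $B_m$, and the squarefree copy of $B_r$ plus monotonicity under subposets for the divisor lattice. One small difference: your non-strict inequalities $a_i\le b_j$ in the cycle make the standard-example argument work uniformly, including $m=2$ where $a_i=b_j$ for $i\neq j$, whereas the paper treats $m=1$ and $m=2$ as separate cases.
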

    \begin{proof}
        A chain is already a linear order. For $B_1$, the claim is the chain case. For $B_2$, the two membership indicators give the upper bound, while $B_2$ is not a chain, so its dimension is at least two. Assume $m\geq3$. The $m$ membership indicators $\phi_i(A)=\ind[i\in A]$ give an $m$-coordinate realization. By the classical coordinate-order equivalence recorded later as Theorem~\ref{thm:dimension}, this gives an $m$-extension upper bound. For the lower bound, let $a_i=\{i\}$ and $b_i=[m]\setminus\{i\}$. Then $a_i\subseteq b_j$ exactly when $i\neq j$, while each pair $(a_i,b_i)$ is incomparable. In a linear extension, at most one such pair can be reversed: if both $b_i<a_i$ and $b_j<a_j$ held for $i\neq j$, the forced relations $a_i<b_j$ and $a_j<b_i$ would form a cycle. Every incomparable pair must be reversed in some member of a realizer, so at least $m$ extensions are required.
    
        For the divisor poset, map $d\mid N$ to $(v_{p_1}(d),\ldots,v_{p_r}(d))$. Divisibility is coordinatewise comparison, giving dimension at most $r$. The squarefree divisors $\prod_{i\in A}p_i$ induce a copy of $B_r$; dimension is monotone under subposets, so the lower bound is $r$.
    \end{proof}
    Supplementary Theorem~S10 proves the coordinate-order equivalence used for the upper bounds, and Supplementary Proposition~S12 gives the full family arguments, including the $m=1,2$ Boolean-lattice cases.
    
    \begin{corollary}[Growing dimension of finite divisibility]\label{cor:divisibility}
        Let $D_n=([n],\mid)$. If the product of the first $r$ primes is at most $n$, then $\dimposet(D_n)\geq r$. Consequently, the dimensions of $D_n$ are unbounded as $n\to\infty$.
        \end{corollary}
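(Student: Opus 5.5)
The argument embeds a Boolean lattice into $D_n$ and then invokes Proposition~\ref{prop:families}(ii) together with monotonicity of dimension under subposets, exactly as in the divisor-lattice part of that proposition. Let $p_1<\cdots<p_r$ be the first $r$ primes, assume $M:=p_1\cdots p_r\leq n$, and define $\iota:2^{[r]}\to[n]$ by $\iota(A)=\prod_{i\in A}p_i$, with the empty product equal to $1$. Every value $\iota(A)$ divides $M$, so it lies in $[n]$. Unique factorization gives
\[
\iota(A)\mid\iota(B)\iff A\subseteq B,
\]
because a squarefree product of distinct primes divides another exactly when its prime set is contained in the other's. Hence $\iota$ is an order embedding of $B_r$ into $D_n$, and its image, the squarefree divisors of $M$, is an induced subposet isomorphic to $B_r$.

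For the lower bound I use that dimension is monotone under induced subposets. If $L_1,\dots,L_t$ is a realizer of $D_n$, restricting each $L_j$ to the image of $\iota$ gives linear extensions of the induced subposet. Their intersection is the restriction of $\bigcap_j L_j$, which is the induced order. So the subposet has dimension at most $t$. Combining this with Proposition~\ref{prop:families}(ii) yields $\dimposet(D_n)\geq\dimposet(B_r)=r$. The case $r=0$ is vacuous, and $r=1$ needs only $n\geq2$, which is covered by the same argument.

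For unboundedness, fix any $r$ and choose $n\geq p_1\cdots p_r$. Such an $n$ exists because the first $r$ primes exist; Euclid's theorem is the only number-theoretic input. Then $\dimposet(D_n)\geq r$. Dimension is also nondecreasing in $n$, since $D_n$ is an induced subposet of $D_{n+1}$. Therefore $\dimposet(D_n)\geq r$ for all sufficiently large $n$, and since $r$ was arbitrary, the sequence $\dimposet(D_n)$ is unbounded. In the notation of Table~\ref{tab:profiles} this is $\dimposet(D_n)\geq r(n)$ with $r(n)\to\infty$.

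There is no real obstacle. The only step needing care is the monotonicity of dimension under induced subposets, which the proof of Proposition~\ref{prop:families} uses without detail, so I spell it out via restriction of a realizer.
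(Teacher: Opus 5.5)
Your proposal is correct and follows essentially the same route as the paper: you embed $B_r$ into $D_n$ as the squarefree divisors of $M=p_1\cdots p_r\leq n$, then combine $\dimposet(B_r)=r$ from Proposition~\ref{prop:families} with monotonicity of dimension under subposets. Your justification of that monotonicity, by restricting each member of a realizer, is the same argument the supplementary proof gives.
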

        \begin{proof}
        Let $M=p_1\cdots p_r\leq n$. Every squarefree divisor of $M$ lies in $[n]$, and these divisors induce a copy of $B_r$ under divisibility. Proposition~\ref{prop:families} and monotonicity under subposets give the bound. Sharper asymptotic estimates are known \cite{lewis2021divisibility}.
    \end{proof}
    
    Thus the comparison is family-level: higher order dimension requires more coordinates for exact uniform representation under the decoder defined next, but it does not impose a behavioral difficulty ordering on every individual query or model.

\section{Coordinate-Order Representation Limits}
    \begin{definition}[Prompt-dependent monotone coordinate decoder]\label{def:decoder}
        Let $\mathsf{Prompt}(U)$ be the set of finite labeled prompts over $U$, and fix an integer $s\geq1$. An $s$-coordinate decoder is a fixed map
        \[
            \Phi:\mathsf{Prompt}(U)\longrightarrow (\mathbb{R}^{U})^s,
            \qquad
            \Phi(\D)=(\phi_1^{\D},\ldots,\phi_s^{\D}),
        \]
        together with the conjunctive decision rule
        \[
            \begin{aligned}
                A_{\Phi}(\D;x,y)=1
                \quad\Longleftrightarrow\quad
                &\phi_i^{\D}(x)\leq\phi_i^{\D}(y) \text{ for every }i\in[s].
            \end{aligned}
        \]
        A task family $\mathcal{T}$ is a subset of $\mathsf{Prompt}(U)\times\mathsf{Poset}(U)$ that is functional in its first coordinate: if $(\D,P),(\D,Q)\in\mathcal{T}$, then $P=Q$. Write this unique target as $P_{\D}$. The decoder is \emph{exact on $\mathcal{T}$} if $A_{\Phi}(\D;x,y)=\ind[x\preceq_{P_{\D}}y]$ for every $(\D,P_{\D})\in\mathcal{T}$ and every $(x,y)\in U^2$. The coordinate maps may depend on the prompt, but $\Phi$ is one decoder for the whole family. This is a zero-error exact-representation notion; no claim is made here for approximate or error-tolerant decoding.
    \end{definition}
    For a single poset $P$, an $s$-coordinate order representation means maps $\phi_1,\ldots,\phi_s:U\to\mathbb{R}$ satisfying
    $x\preceq_P y$ if and only if $\phi_i(x)\leq\phi_i(y)$ for every $i\in[s]$.
    The following equivalence is classical; see \citet[Chapter~1]{trotter1992dimension}. We record it with a proof for self-containment because Corollary~\ref{cor:icl} depends on the exact biconditional form.
    \begin{theorem}[Classical coordinate-order equivalence]\label{thm:dimension}
        A finite poset has an $s$-coordinate order representation if and only if $\dimposet(P)\leq s$.
    \end{theorem}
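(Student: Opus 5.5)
We prove the two directions separately, using only the definition of Dushnik--Miller dimension as the least number of linear extensions of $P$ whose intersection is $\preceq_P$. Throughout, $U$ is the finite ground set. We allow $s\ge \dimposet(P)$ by padding a realizer with repeated copies, since repeating a linear extension leaves the intersection unchanged. For an antichain, or any $P$ with $\dimposet(P)$ small, no special case is needed once this padding is in place.

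\emph{Dimension $\le s$ implies a representation.} Let $L_1,\dots,L_s$ be a realizer, padded by repetition if it has fewer members. Define $\phi_i(x)$ to be the rank of $x$ in $L_i$, meaning its position in $\{1,\dots,|U|\}$. Because $L_i$ is a linear order, $\phi_i(x)\le\phi_i(y)$ holds if and only if $x\le_{L_i}y$. Hence $\phi_i(x)\le \phi_i(y)$ for all $i$ holds if and only if $(x,y)\in\bigcap_i L_i$. That intersection is $\preceq_P$, so these maps form an $s$-coordinate order representation.

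\emph{A representation implies dimension $\le s$.} This is the main obstacle. The coordinates $\phi_i$ need not be injective, so each one induces only a total preorder, and ties must be broken carefully. Given $\phi_1,\dots,\phi_s$ representing $P$, fix a linear extension $L_0$ of $P$. For each $i$, define $L_i$ as the lexicographic order: sort by $\phi_i$, and break ties among equal $\phi_i$-values using $L_0$.

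\begin{itemize}
\item $L_i$ is a linear order, since it is a lexicographic combination of a total preorder with a linear order.
\item $L_i$ extends $P$. If $x\prec_P y$, then $\phi_i(x)\le\phi_i(y)$. When the inequality is strict, $x<_{L_i}y$ directly. When $\phi_i(x)=\phi_i(y)$, the tie-break gives $x<_{L_0}y$, so again $x<_{L_i}y$.
\item Every $L_i$ contains $\preceq_P$, so $\preceq_P\subseteq\bigcap_i L_i$.
\end{itemize}

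For the reverse inclusion, suppose $x\ne y$ and $x\le_{L_i}y$ for every $i$. By construction of $L_i$, this gives $\phi_i(x)\le\phi_i(y)$ for every $i$, whether the comparison in $L_i$ was decided by $\phi_i$ or by the tie-break. The representation property then yields $x\preceq_P y$. So $\bigcap_i L_i=\preceq_P$, and therefore $\dimposet(P)\le s$.

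The delicate point is the tie-breaking. If ties were broken arbitrarily, $L_i$ could fail to extend $P$ on pairs with $\phi_i(x)=\phi_i(y)$ and $x\prec_P y$. Breaking ties by a single fixed linear extension $L_0$ handles both requirements at once. Antisymmetry of $P$ also guarantees that no two distinct elements share all coordinates: equal coordinates would give $x\preceq y$ and $y\preceq x$. This is consistent with the construction, though the argument does not need it.
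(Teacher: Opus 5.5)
Your proposal is correct and follows essentially the paper's own proof. You use the rank functions of a realizer padded with repeated extensions for one direction, and for the other you sort each coordinate with ties broken by a fixed linear extension $L_0$. Your reverse-inclusion step ($x\le_{L_i}y$ for all $i$ implies $\phi_i(x)\le\phi_i(y)$ for all $i$) is just the contrapositive of the paper's argument that a strict reversal $\phi_j(x)>\phi_j(y)$ makes $L_j$ place $y$ before $x$.
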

    \begin{proof}
        Suppose $\phi_1,\ldots,\phi_s$ represent $P$. Fix a linear extension $L_0$ of $P$. For each $i$, sort elements by increasing $\phi_i$, breaking ties by $L_0$, to obtain a linear extension $L_i$. If $x\preceq_P y$, every $L_i$ places $x$ before $y$. If $x\not\preceq_P y$, the representation gives some $j$ with $\phi_j(x)>\phi_j(y)$, so $L_j$ places $y$ before $x$. Hence $P=\cap_i L_i$ and $\dimposet(P)\leq s$. Conversely, if $\dimposet(P)=t\leq s$, take a $t$-member realizer and repeat one of its linear extensions until $s$ orders $L_1,\ldots,L_s$ are listed. Let $\phi_i(x)$ be the rank of $x$ in $L_i$. Then the coordinatewise condition is equivalent to membership in every $L_i$, hence to $x\preceq_P y$.
    \end{proof}
    Supplementary Theorem~S10 gives the same proof in expanded form, explicitly checking tie-breaking and both inclusions of the intersection; Supplementary Corollary~S11 gives the task-family version of the capability boundary.

    \begin{corollary}[Exact capability boundary for coordinate decoders]\label{cor:icl}
        Let $\mathcal{T}$ be a task family as in Definition~\ref{def:decoder}. An exact $s$-coordinate decoder exists on $\mathcal{T}$ if and only if $\dimposet(P_{\D})\leq s$ for every $(\D,P_{\D})\in\mathcal{T}$. Consequently:
        \begin{enumerate}
        \item if every target in $\mathcal{T}$ has $\width(P_{\D})\leq s$, then an exact $s$-coordinate decoder exists; and
        \item if some target has $\dimposet(P_{\D})>s$, no exact $s$-coordinate decoder exists on $\mathcal{T}$.
    \end{enumerate}
    \end{corollary}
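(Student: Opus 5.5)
The plan is to reduce the family-level statement to Theorem~\ref{thm:dimension} applied one prompt at a time. The point of the functionality condition in Definition~\ref{def:decoder} is that it removes any interaction between different targets. Since $\Phi$ may depend arbitrarily on $\D$, a decoder on $\mathcal{T}$ is just a choice, for each prompt $\D$ in the first projection of $\mathcal{T}$, of an $s$-tuple $(\phi_1^{\D},\ldots,\phi_s^{\D})$. Exactness of $\Phi$ on $\mathcal{T}$ says precisely that, for each such $\D$, this tuple is an $s$-coordinate order representation of $P_{\D}$. Prompts outside the projection impose no constraint and can be sent to any tuple, for instance the all-zero maps.

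\emph{Forward direction.} Suppose $\Phi$ is exact on $\mathcal{T}$. For each $(\D,P_{\D})\in\mathcal{T}$, the maps $\phi_i^{\D}$ give an $s$-coordinate representation of $P_{\D}$. Theorem~\ref{thm:dimension} then yields $\dimposet(P_{\D})\leq s$.

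\emph{Converse.} Suppose every target satisfies $\dimposet(P_{\D})\leq s$. For each $\D$ in the projection, Theorem~\ref{thm:dimension} supplies rank maps representing $P_{\D}$, and we set $\Phi(\D)$ to be these maps. The choice is well defined because the target of $\D$ is unique. Since $\mathsf{Prompt}(U)$ is a set of finite objects, we can fix one realizer per target canonically, for example via a fixed enumeration of linear orders on the finite set $U$; no choice principle beyond finite choice is needed. Exactness then holds pointwise.

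\emph{Consequences.} Item~2 is the contrapositive of the forward direction. For item~1, the main step is Dilworth's classical bound $\dimposet(P)\leq\width(P)$, which we invoke from \citet{trotter1992dimension} rather than reprove. Here the antichain on one element needs care, since its dimension is $1\leq s$ because $s\geq1$. If a self-contained argument is preferred, I would instead construct $s$ coordinates directly from a chain partition $C_1,\ldots,C_w$ with $w\leq s$. For each chain $C_i$, take the linear extension that places each element of $C_i$ as low as possible relative to elements incomparable to it, and then check that every incomparable pair is reversed in one of these extensions.

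The main obstacle is this Dilworth-based inequality. Everything else is bookkeeping once the pointwise, prompt-by-prompt nature of the decoder class is made explicit. I would cite the inequality as classical, consistent with the paper's stated policy of using order-theoretic results without claiming them as new.
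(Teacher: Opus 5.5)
Your proposal is correct and matches the paper's proof step for step. The forward direction applies Theorem~\ref{thm:dimension} prompt by prompt. The converse chooses one padded $s$-coordinate realization per target, which is well defined by functionality, and extends $\Phi$ arbitrarily off $\mathcal{T}$. Item~2 is the contrapositive, and item~1 cites the classical inequality $\dimposet(P)\leq\width(P)$. Your optional chain-partition sketch is the standard proof of that inequality; its real content is the existence of a linear extension placing a chain below everything incomparable to it. It is not needed, since both you and the paper cite the inequality.
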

    \begin{proof}
        If an exact decoder exists, its coordinates for each prompt form an $s$-coordinate representation of the associated target; Theorem~\ref{thm:dimension} gives $\dimposet(P_{\D})\leq s$. Conversely, if every target has dimension at most $s$, choose one $s$-coordinate realization for each target (padding by repeated coordinates when necessary), define $\Phi(\D)$ to return that realization on prompts appearing in $\mathcal{T}$, and extend $\Phi$ arbitrarily elsewhere. This gives one exact decoder on the whole family. For the sufficient width condition, the classical inequality $\dimposet(P)\leq\width(P)$ gives $\dimposet(P_{\D})\leq s$ whenever $\width(P_{\D})\leq s$ \cite{dilworth1950decomposition,trotter1992dimension}. The impossibility statement is the first implication applied to a target of dimension greater than $s$.
    \end{proof}

    This boundary is exact for the decoder class in Definition~\ref{def:decoder}. Its sufficiency direction is representational and may be nonconstructive; it does not assert efficient recovery of the selected coordinates from demonstrations. It also does not show that transformer hidden width, attention-update rank, or the number of arbitrary task vectors bounds order dimension: unrestricted vectors can encode finite combinatorial objects, and unrestricted decoders need not be coordinatewise monotone. Proposition~\ref{prop:families} gives concrete instances: one coordinate for chains, $m$ for $B_m$, and one coordinate per distinct prime for the divisor lattice of a fixed $N$.

\section{Positive and Negative Certificates}
    Let $H(P)$ be the complete Hasse DAG of a finite poset. For $a\preceq b$, define
    \[
        \begin{aligned}
            \lambda_P^+(a,b)=\min\{k :{}& a=x_0\prec x_1\prec\cdots \prec x_k=b\text{ is a cover path}\}.
        \end{aligned}
    \]
    \begin{proposition}[Positive chain-certificate bound]\label{prop:certificate}
        Any positive certificate consisting only of demonstrated cover edges has at least $\lambda_P^+(a,b)$ edges. Consequently, a procedure restricted to composing at most $T$ cover edges cannot certify every positive query with $\lambda_P^+(a,b)>T$.
    \end{proposition}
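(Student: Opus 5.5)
The plan is to read a ``positive certificate consisting only of demonstrated cover edges'' for the query $(a,b)$ as a finite set $S\subseteq\operatorname{Cov}(P)$ whose reflexive transitive closure contains $(a,b)$. The point is that such a set certifies $a\preceq b$ in every completion. We first reduce to paths. By the description of reflexive transitive closure used in Theorem~\ref{thm:trichotomy} (the least closure, Lemma~S2), $(a,b)\in\TC(S)$ holds exactly when $a=b$ or there is a finite directed walk $a=x_0\to x_1\to\cdots\to x_k=b$ with every $(x_{j-1},x_j)\in S$. Deleting repeated vertices turns such a walk into a directed path that still uses only edges of $S$. If $a=b$, then $\lambda_P^+(a,a)=0$ and the bound is trivial, so we may assume $a\neq b$ and $k\geq 1$.

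The key step is to show that this path is a cover path of $P$ in the sense of the definition of $\lambda_P^+$. Each edge $(x_{j-1},x_j)\in S\subseteq\operatorname{Cov}(P)$ is a strict cover, so $x_{j-1}\prec x_j$. The sequence $a=x_0\prec x_1\prec\cdots\prec x_k=b$ is therefore admissible in the minimum defining $\lambda_P^+(a,b)$, and so $k\geq\lambda_P^+(a,b)$. Its $k$ edges are distinct members of $S$, because a path does not repeat edges. This gives $|S|\geq k\geq\lambda_P^+(a,b)$. The minimum is well defined because $a\preceq b$ and Observation~\ref{prop:reach} guarantee that some Hasse path exists.

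For the consequence, we formalize a ``procedure restricted to composing at most $T$ cover edges'' as one whose positive verdicts are each justified by a set of at most $T$ demonstrated cover edges whose composition yields $(a,b)$. If $\lambda_P^+(a,b)>T$, then any such justification would be a certificate with fewer than $\lambda_P^+(a,b)$ edges, which contradicts the first part. Such a query therefore cannot be certified.

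The main obstacle is definitional rather than technical. We must fix precisely what counts as a certificate and what ``composing'' means, so that the reduction from closure membership to a single simple path is legitimate. That reduction is where the argument has content, since the certificate could in principle contain extra or branching edges. We handle it by extracting a simple path and counting only the edges on that path.
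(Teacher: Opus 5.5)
Your proof is correct and follows the paper's approach: the paper identifies a cover-edge certificate with a directed path in $H(P)$ and then appeals to the definition of $\lambda_P^+(a,b)$ as the minimum length of such a path. Your added step of extracting a simple path from an arbitrary edge set whose closure contains $(a,b)$ spells out what the paper treats as definitional, and it is easier than you suggest, since a walk along strict covers is strictly increasing and so never repeats a vertex.
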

    \begin{proof}
        Such a certificate is exactly a directed Hasse path, and $\lambda_P^+(a,b)$ is the minimum length of any such path.
    \end{proof}
    Supplementary Proposition~S13 states the certificate model formally and separates proof length from unrestricted reachability computation.

    A false query admits a dual witness. Call $S\subseteq U$ \emph{forward closed} in $H$ if no edge leaves $S$. The following elementary graph fact is recorded for self-containment because it identifies the canonical witness used by $\nu_P^-$. 

    \begin{lemma}[Forward-closed nonreachability witness]\label{thm:negative}
        For vertices $a,b$ of a finite Hasse DAG, there is no directed path from $a$ to $b$ if and only if there exists a forward-closed set $S$ with $a\in S$ and $b\notin S$.
    \end{lemma}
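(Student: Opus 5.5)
Both directions are short, and the natural witness is the reachable set $S=\Reach_H(a)$, including $a$ itself via the length-zero path.

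For the forward direction, assume there is no directed path from $a$ to $b$. We check three things about $S$. It contains $a$. It excludes $b$ by hypothesis. It is forward closed: if $u\in S$ and $(u,v)$ is an edge of $H$, then appending that edge to a path from $a$ to $u$ gives a path from $a$ to $v$, so $v\in S$. This already produces the required witness. We will also note that every forward-closed set containing $a$ contains $S$. The argument is an induction on path length: the path starts in the set, and no edge leaves it. Hence $S$ is the unique inclusion-minimum witness, which is what justifies the earlier definition $\nu_P^-(a,b)=|\Reach_{H(P)}(a)|$.

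For the converse, suppose $S$ is forward closed with $a\in S$ and $b\notin S$, and suppose for contradiction that $a=x_0\to x_1\to\cdots\to x_k=b$ is a directed path. Since $x_0\in S$ and $x_k\notin S$, there is a first index $i$ with $x_i\in S$ and $x_{i+1}\notin S$. The edge $(x_i,x_{i+1})$ then leaves $S$, contradicting forward closure. Finiteness and acyclicity are not actually needed here; they only guarantee that $S$ is a finite, well-defined object.

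There is no substantive obstacle. The one point of care is convention: reachability must include the length-zero path, so that $a\in\Reach_H(a)$. In addition, by Observation~\ref{prop:reach}, nonreachability in $H(P)$ coincides with $a\not\preceq_P b$, so the lemma indeed certifies false queries.
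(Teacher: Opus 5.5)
Your proof is correct and follows the paper's argument: $\Reach_H(a)$, including the length-zero path, is the witness, and a directed path from $a\in S$ to $b\notin S$ would have to use an edge leaving $S$. Your extra remark that every forward-closed set containing $a$ contains $\Reach_H(a)$ is the paper's Corollary~\ref{cor:negative-size}, which is proved there by the same induction on path length.
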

    \begin{proof}
        If such an $S$ exists, a path beginning at $a\in S$ cannot leave $S$, so it cannot reach $b\notin S$. Conversely, if $b$ is not reachable from $a$, take $S=\Reach_H(a)$. It contains $a$ and excludes $b$. If an edge left $S$, its endpoint would also be reachable from $a$, a contradiction. Thus $S$ is forward closed.
    \end{proof}

    \begin{corollary}[Minimality of the canonical negative witness]\label{cor:negative-size}
        If $b$ is not reachable from $a$, then $\Reach_H(a)$ is the unique inclusion-minimal forward-closed set containing $a$ and excluding $b$. It therefore also has minimum cardinality among such sets, and
        $\nu_P^-(a,b)=|\Reach_H(a)|$.
        In particular, for fixed $H$ and $a$, this quantity is independent of which nonreachable vertex $b$ is queried.
    \end{corollary}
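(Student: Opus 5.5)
The plan is to show that every forward-closed set $S$ with $a\in S$ and $b\notin S$ must contain $\Reach_H(a)$. Together with Lemma~\ref{thm:negative}, which already says that $\Reach_H(a)$ is itself such a set, this gives both the uniqueness of the inclusion-minimal witness and its minimum cardinality at once.

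\textbf{Step 1: containment.} Let $S$ be any forward-closed set containing $a$ and let $v\in\Reach_H(a)$, so there is a directed path $a=x_0\to x_1\to\cdots\to x_k=v$. I induct along the path. The base case is $x_0=a\in S$. For the inductive step, if $x_j\in S$, then the edge $x_j\to x_{j+1}$ cannot leave $S$ by forward closure, so $x_{j+1}\in S$. Hence $v\in S$, and therefore $\Reach_H(a)\subseteq S$. Notice that this argument never uses the condition $b\notin S$.

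\textbf{Step 2: admissibility of $\Reach_H(a)$.} The set $\Reach_H(a)$ contains $a$ through the length-zero path. It excludes $b$ because $b$ is assumed nonreachable from $a$. It is forward closed by the argument already given in the proof of Lemma~\ref{thm:negative}. So it belongs to the family of witnesses.

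\textbf{Step 3: conclusions.} By Step 1, $\Reach_H(a)$ is contained in every witness, so it is the least element of the family under inclusion. A least element is the unique inclusion-minimal element: any inclusion-minimal witness $S$ satisfies $\Reach_H(a)\subseteq S$, and minimality of $S$ then forces equality. Containment also gives $|\Reach_H(a)|\le|S|$ for every witness $S$, so $\Reach_H(a)$ has minimum cardinality. This matches the definition of $\nu_P^-(a,b)$ used in the structural profile. Finally, $\Reach_H(a)$ is defined without reference to $b$, so its size is the same for every nonreachable $b$.

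\textbf{Main obstacle.} There is no real difficulty here. The only point that needs care is keeping the notions straight: a least element yields uniqueness of the inclusion-minimal witness, which is stronger than mere minimality. One should also check that the quantity this argument produces, $|\Reach_H(a)|$, agrees with the earlier definition of $\nu_P^-(a,b)$, since the definition says $a\not\preceq_P b$ while the lemma speaks of nonreachability. Observation~\ref{prop:reach} makes these two conditions equivalent, so the identification goes through.
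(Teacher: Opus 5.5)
Your proposal is correct and follows the paper's own argument. The paper likewise shows, by induction on path length, that every forward-closed set containing $a$ contains $\Reach_H(a)$, and it combines this with Lemma~\ref{thm:negative}, which shows that $\Reach_H(a)$ is itself an admissible forward-closed witness; your check that $a\not\preceq_P b$ matches nonreachability via Observation~\ref{prop:reach} is a careful but inessential addition, since the paper defines $\nu_P^-(a,b)$ directly as $|\Reach_{H(P)}(a)|$.
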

    \begin{proof}
        Every forward-closed set containing $a$ must contain every vertex reachable from $a$, while $\Reach_H(a)$ itself is forward closed by Lemma~\ref{thm:negative}.
    \end{proof}
    Supplementary Lemma~S14 and Supplementary Corollary~S15 provide the full equivalence and minimality proof, including induction on path length.

    Positive certificates can be local paths, while the canonical negative witness may expose a large reachable region. This distinction refines the informal claim that false comparability is merely ``absence of a path.'' It still does not yield an unrestricted transformer-depth lower bound: a model may use a global algorithm or an encoded topological summary. Graph-transformer depth--width tradeoffs provide a related architectural perspective \cite{yehudai2025depth}.

\section{Implications, Scope, and Limitations}
    \paragraph{How the proposed framework fits together.}
        Our framework analyzes poset ICL as a sequence of logically distinct questions. First, Proposition~\ref{prop:criterion} asks whether the prompt determines the answer at all: a binary answer is universally sound exactly when every poset in the version space agrees on the query. For the open-world background containing all posets on the fixed universe, Theorem~\ref{thm:trichotomy} turns this general criterion into an explicit classifier with three outcomes---forced true, forced false, or genuinely unknown---while Corollary~\ref{cor:positive} gives the corresponding positive-only case. If the prompt instead declares a complete Hasse diagram, Observation~\ref{prop:reach} reduces comparison to graph reachability. Theorem~\ref{thm:teaching-exact} then changes the question from answering one query to identifying the entire target poset: the optimal open-world prompt consists of all indispensable cover relations together with a minimum set of negative blockers. Corollary~\ref{cor:teaching} gives the resulting bounds and exact extremal values, and the identity
        \[
        \tau_{\mathrm{ow}}(P)-\tau_{\mathrm{cw}}(P)=\beta(P)
        \]
        shows that the blocker term is precisely the additional label cost of open-world semantics. Once the target relation is identified, Table~\ref{tab:profiles} separates its structural and proof-related difficulty into height, width, order dimension, cover count, and positive and negative certificate sizes. Finally, Definition~\ref{def:decoder}, Theorem~\ref{thm:dimension}, and Corollary~\ref{cor:icl} characterize exactly which targets can be represented by the proposed monotone coordinate-decoder class, while Proposition~\ref{prop:certificate}, Lemma~\ref{thm:negative}, and Corollary~\ref{cor:negative-size} describe explicit witnesses for positive comparability and negative nonreachability.

    \paragraph{Examples illustrating the separation of difficulty sources.}
        Consider first the chain $C_n$. It has order dimension one by Proposition~\ref{prop:families}, and its $n-1$ cover edges are sufficient to teach it under both open- and closed-world semantics. Nevertheless, comparing its bottom and top elements requires a positive Hasse certificate of length $n-1$; thus representation is simple even though a particular proof can be long. The antichain $A_n$ demonstrates the opposite teaching behavior. Its complete Hasse diagram contains no cover edges, so it is specified without positive edges under closed-world semantics, but Corollary~\ref{cor:teaching} shows that open-world teaching requires all $n(n-1)$ ordered negative comparisons. The Boolean lattice $B_m$ provides a third contrast: a cover query has a one-edge positive certificate, yet Proposition~\ref{prop:families} gives $\dimposet(B_m)=m$, so Corollary~\ref{cor:icl} rules out exact representation by fewer than $m$ monotone coordinates. Similarly, the divisor lattice of
$N=\prod_{i=1}^{r}p_i^{e_i}$ has dimension $r$, corresponding to the $r$ independent prime-exponent coordinates. These examples show why prompt ambiguity, teaching cost, certificate length, and representation dimension must not be collapsed into a single notion of ``difficulty.''

    \paragraph{Consequences for benchmark design.}
        A poset benchmark should state the universe and background theory, whether demonstrations denote covers or arbitrary comparisons, whether the displayed diagram is complete, and whether \emph{unknown} is an admissible output. Theorem~\ref{thm:trichotomy} can be used as a model-independent logical baseline for mixed positive/negative open-world prompts. Figure~\ref{fig:ambiguity-n4} illustrates why this distinction matters: under the explicitly defined exhaustive averaging scheme over all 219 labeled four-element posets, a substantial fraction of unobserved queries remain ambiguous even at high label coverage. Such queries should not automatically be scored as ordinary binary errors. In addition, prompts using familiar relation names or numerical elements may permit pretrained semantic recall. Abstract relation symbols, randomized element names, or relabeling controls are therefore needed when the objective is to isolate inference from demonstrations rather than prior factual knowledge.

    \paragraph{Scope of the theoretical conclusions.}
        The results separate four possible sources of failure. Non-identifiability is an information limitation established by Proposition~\ref{prop:criterion} and Theorem~\ref{thm:trichotomy}; a large value of $\tau_{\mathrm{ow}}(P)$ is a prompt-budget limitation characterized by Theorem~\ref{thm:teaching-exact}; large values of $\lambda_P^+$ or $\nu_P^-$ are certificate limitations for procedures restricted to the witnesses formalized in Proposition~\ref{prop:certificate} and Lemma~\ref{thm:negative}; and high order dimension is a representation limitation only for the exact monotone coordinate decoders of Definition~\ref{def:decoder}. We therefore do not infer limitations for unrestricted transformers, softmax attention, general task vectors, approximate decoders, or arbitrary graph algorithms. Transformer universality results remain compatible with our theory: they concern what a sufficiently expressive architecture and prompt can compute, whereas our identifiability results concern what a particular finite prompt logically entails.

    \paragraph{Novelty relative to prior and concurrent work.}
        The earlier empirical study of poset ICL \cite{dutta-etal-2025-assessing} motivates the problem through observed model behavior, but it does not provide the version-space trichotomy, the blocker-based teaching characterization, or the exact coordinate-decoder boundary developed here. Information-theoretic analyses of ICL study uncertainty under data-generating distributions \cite{jeon2024information}; in contrast, Proposition~\ref{prop:criterion} and Theorem~\ref{thm:trichotomy} use logical agreement over all relational completions consistent with the prompt. Concurrent relational-database work studies whether support labels identify latent predictive mechanisms \cite{chen2026openrfm}, whereas our results characterize completions of partial orders, their open-world teaching cost, and their Dushnik--Miller dimension. The contribution is therefore not a claim to the first theory of relational learning in general, but a unified framework connecting logical identifiability, optimal prompt teaching, order-theoretic structure, exact coordinate representation, and proof certificates for in-context learning on finite posets.

\section{Conclusion}
    We presented a theory of ICL on partial orders that begins with a basic requirement: the prompt must identify the queried comparison. Finite open-world prompts admit an exact true/false/unknown characterization through reflexive transitive closure, antisymmetry, and negative constraints; complete Hasse prompts reduce to reachability. Open-world teaching cost decomposes into mandatory covers and a blocker-set surcharge, with a tight class maximum at the antichain. For identified structures, the profile table exposes independent dimensions of structural and certificate complexity. Finally, the classical order-dimension equivalence yields an exact capability boundary for the formally defined coordinate-decoder class. The framework supports benchmark design while keeping logical ambiguity, prompt budget, structural complexity, and decoder capacity distinct.

\bibliographystyle{bibstyle} 
\bibliography{references}  

@inproceedings{brown2020language,
  title={Language Models Are Few-Shot Learners},
  author={Brown, Tom B. and Mann, Benjamin and Ryder, Nick and Subbiah, Melanie and Kaplan, Jared and Dhariwal, Prafulla and Neelakantan, Arvind and Shyam, Pranav and Sastry, Girish and Askell, Amanda and Agarwal, Sandhini and Herbert-Voss, Ariel and Krueger, Gretchen and Henighan, Tom and Child, Rewon and Ramesh, Aditya and Ziegler, Daniel M. and Wu, Jeffrey and Winter, Clemens and Hesse, Christopher and Chen, Mark and Sigler, Eric and Litwin, Mateusz and Gray, Scott and Chess, Benjamin and Clark, Jack and Berner, Christopher and McCandlish, Sam and Radford, Alec and Sutskever, Ilya and Amodei, Dario},
  booktitle={Advances in Neural Information Processing Systems},
  volume={33},
  pages={1877--1901},
  year={2020}
}

@inproceedings{garg2022what,
  title={What Can Transformers Learn In-Context? A Case Study of Simple Function Classes},
  author={Garg, Shivam and Tsipras, Dimitris and Liang, Percy S. and Valiant, Gregory},
  booktitle={Advances in Neural Information Processing Systems},
  volume={35},
  pages={30583--30598},
  year={2022}
}

@inproceedings{akyurek2023what,
  title={What Learning Algorithm Is In-Context Learning? Investigations with Linear Models},
  author={Aky{\"u}rek, Ekin and Schuurmans, Dale and Andreas, Jacob and Ma, Tengyu and Zhou, Denny},
  booktitle={International Conference on Learning Representations},
  year={2023}
}

@inproceedings{dai2023why,
  title={Why Can {GPT} Learn In-Context? Language Models Secretly Perform Gradient Descent as Meta-Optimizers},
  author={Dai, Damai and Sun, Yutao and Dong, Li and Hao, Yaru and Ma, Shuming and Sui, Zhifang and Wei, Furu},
  booktitle={Findings of the Association for Computational Linguistics: ACL 2023},
  pages={4005--4019},
  publisher={Association for Computational Linguistics},
  year={2023}
}

@inproceedings{guo2024how,
  title={How Do Transformers Learn In-Context Beyond Simple Functions? A Case Study on Learning with Representations},
  author={Guo, Tianyu and Hu, Wei and Mei, Song and Wang, Huan and Xiong, Caiming and Savarese, Silvio and Bai, Yu},
  booktitle={International Conference on Learning Representations},
  year={2024}
}

@inproceedings{bhattamishra2024understanding,
  title={Understanding In-Context Learning in Transformers and {LLM}s by Learning to Learn Discrete Functions},
  author={Bhattamishra, Satwik and Patel, Arkil and Blunsom, Phil and Kanade, Varun},
  booktitle={International Conference on Learning Representations},
  year={2024}
}

@inproceedings{jeon2024information,
  title={An Information-Theoretic Analysis of In-Context Learning},
  author={Jeon, Hong Jun and Lee, Jason D. and Lei, Qi and Van Roy, Benjamin},
  booktitle={Proceedings of the 41st International Conference on Machine Learning},
  series={Proceedings of Machine Learning Research},
  volume={235},
  pages={21522--21554},
  publisher={PMLR},
  year={2024}
}

@inproceedings{hendel2023task,
  title={In-Context Learning Creates Task Vectors},
  author={Hendel, Roee and Geva, Mor and Globerson, Amir},
  booktitle={Findings of the Association for Computational Linguistics: EMNLP 2023},
  pages={9318--9333},
  publisher={Association for Computational Linguistics},
  year={2023}
}

@inproceedings{yin2025heads,
  title={Which Attention Heads Matter for In-Context Learning?},
  author={Yin, Kayo and Steinhardt, Jacob},
  booktitle={Proceedings of the 42nd International Conference on Machine Learning},
  series={Proceedings of Machine Learning Research},
  volume={267},
  pages={72428--72461},
  publisher={PMLR},
  year={2025}
}

@article{tang2026concept,
  title={In-Context Learning Operates as Concept Subspace Learning},
  author={Tang, Wei and Jiang, Xinyan and Karray, Fakhri and Hu, Lijie},
  journal={arXiv preprint arXiv:2605.18830},
  year={2026}
}

@article{chen2026openrfm,
  title={{OpenRFM}: Dissecting Relational In-Context Learning},
  author={Chen, Zhikai and Yin, Junyu and Gu, Jialiang and Xiong, Siheng and Liu, Xiaoze and Zhang, Ruowang and Zhou, Keren and Guo, Kai},
  journal={arXiv preprint arXiv:2606.04320},
  year={2026}
}

@inproceedings{wu2025relational,
  title={Large Language Models Are Good Relational Learners},
  author={Wu, Fang and Dwivedi, Vijay Prakash and Leskovec, Jure},
  booktitle={Proceedings of the 63rd Annual Meeting of the Association for Computational Linguistics (Volume 1: Long Papers)},
  pages={7835--7854},
  address={Vienna, Austria},
  publisher={Association for Computational Linguistics},
  doi={10.18653/v1/2025.acl-long.386},
  year={2025}
}

@inproceedings{qiu2025ask,
  title={Ask, and It Shall Be Given: On the Turing Completeness of Prompting},
  author={Qiu, Ruizhong and Xu, Zhe and Bao, Wenxuan and Tong, Hanghang},
  booktitle={International Conference on Learning Representations},
  year={2025}
}

@inproceedings{furuya2025universal,
  title={Transformers Are Universal In-Context Learners},
  author={Furuya, Takashi and de Hoop, Maarten V. and Peyr{\'e}, Gabriel},
  booktitle={International Conference on Learning Representations},
  year={2025}
}

@inproceedings{wang2023graph,
  title={Can Language Models Solve Graph Problems in Natural Language?},
  author={Wang, Heng and Feng, Shangbin and He, Tianxing and Tan, Zhaoxuan and Han, Xiaochuang and Tsvetkov, Yulia},
  booktitle={Advances in Neural Information Processing Systems},
  volume={36},
  pages={30840--30861},
  year={2023}
}

@inproceedings{yehudai2025depth,
  title={Depth-Width Tradeoffs for Transformers on Graph Tasks},
  author={Yehudai, Gilad and Sanford, Clayton and Bechler-Speicher, Maya and Fischer, Orr and Gilad-Bachrach, Ran and Globerson, Amir},
  booktitle={Advances in Neural Information Processing Systems},
  volume={38},
  pages={22949--22981},
  year={2025}
}

@article{dushnik1941posets,
  title={Partially Ordered Sets},
  author={Dushnik, Ben and Miller, E. W.},
  journal={American Journal of Mathematics},
  volume={63},
  number={3},
  pages={600--610},
  year={1941}
}

@book{trotter1992dimension,
  title={Combinatorics and Partially Ordered Sets: Dimension Theory},
  author={Trotter, William T.},
  publisher={Johns Hopkins University Press},
  address={Baltimore, Maryland},
  year={1992}
}

@article{dilworth1950decomposition,
  title={A Decomposition Theorem for Partially Ordered Sets},
  author={Dilworth, Robert P.},
  journal={Annals of Mathematics},
  volume={51},
  number={1},
  pages={161--166},
  year={1950},
  doi={10.2307/1969503}
}

@article{lewis2021divisibility,
  title={The Order Dimension of Divisibility},
  author={Lewis, David and Souza, Victor},
  journal={Journal of Combinatorial Theory, Series A},
  volume={179},
  pages={105391},
  year={2021}
}

@inproceedings{mitchell1977version,
  title={Version Spaces: A Candidate Elimination Approach to Rule Learning},
  author={Mitchell, Tom M.},
  booktitle={Proceedings of the Fifth International Joint Conference on Artificial Intelligence},
  pages={305--310},
  year={1977}
}

@incollection{reiter1978closed,
  title={On closed world data bases},
  author={Reiter, Raymond},
  booktitle={Readings in Artificial Intelligence and Databases},
  pages={248--258},
  year={1989},
  publisher={Elsevier}
}

@inproceedings{dutta-etal-2025-assessing,
    title = "Assessing the Limits of In-Context Learning beyond Functions using Partially Ordered Relation",
    author = "Dutta, Debanjan  and
      Ansari, Faizanuddin  and
      Das, Swagatam",
    booktitle = "Proceedings of the 14th International Joint Conference on Natural Language Processing and the 4th Conference of the Asia-Pacific Chapter of the Association for Computational Linguistics",
    year = "2025",
    publisher = "The Asian Federation of Natural Language Processing and The Association for Computational Linguistics",
    url = "https://aclanthology.org/2025.ijcnlp-long.50/",
    doi = "10.18653/v1/2025.ijcnlp-long.50",
    pages = "900--918",
    ISBN = "979-8-89176-298-5"
}

@article{goldman1995teaching,
  title={On the Complexity of Teaching},
  author={Goldman, Sally A. and Kearns, Michael J.},
  journal={Journal of Computer and System Sciences},
  volume={50},
  number={1},
  pages={20--31},
  year={1995}
}
\clearpage
\setcounter{section}{0}
\setcounter{theorem}{0}
\renewcommand{\thetheorem}{S\arabic{theorem}}

\title{Supplementary Material for ``Identifiability and Order-Dimension Limits of In-Context Learning on Partial Orders''}
\author{}
\makeappendixtitle

\begin{abstract}
    This appendix supplies expanded proofs, edge-case checks, and the exact enumeration underlying the deterministic ambiguity illustration in the main paper. Results use the prefix S, and the opening map links each main-paper result to the corresponding statement here. We also give the full class-maximum teaching argument, the open- versus closed-world teaching comparison, the exact coordinate-decoder capability boundary, and details behind the structural profile table.
\end{abstract}

    \section{Cross-Document Reference Map}
        The theorem titles below are copied exactly from the main paper. Each item gives the expanded proof in this document.
        \begin{itemize}
            \item \textbf{Sound-answer criterion:} Proposition S1.
            \item \textbf{Open-world completion trichotomy and positive-only trichotomy:} Lemmas S2--S3 and Theorem S4.
            \item \textbf{Deterministic ambiguity illustration in Figure 1 of the main paper:} the section ``Exact Finite-Universe Ambiguity Enumeration.''
            \item \textbf{Closed-world reachability:} Observation S5.
            \item \textbf{Dependence on background theory:} Proposition S6.
            \item \textbf{Exact teaching characterization:} Lemma S7 and Theorem S8.
            \item \textbf{Teaching bounds, class maximum, and open/closed-world cost:} Corollary S9.
            \item \textbf{Classical coordinate-order equivalence:} Theorem S10.
            \item \textbf{Exact coordinate-decoder capability boundary:} Corollary S11.
            \item \textbf{Standard families, profile-table details, and finite divisibility:} Proposition S12 and the following profile derivations.
            \item \textbf{Positive chain-certificate bound:} Proposition S13.
            \item \textbf{Forward-closed witness and canonical negative witness:} Lemma S14 and Corollary S15.
        \end{itemize}

    \section{Semantic Setup and Sound Answers}
        Let $U$ be a finite known universe. A prompt is $\D=(\D^+,\D^-)$, where $\D^+\subseteq U^2$ contains positive comparisons and $\D^-\subseteq U^2$ contains negative comparisons. A background theory $\B$ is a class of posets on $U$. Its version space is
        \[
            \V_{\B}(\D)=\{P\in\B:\D^+\subseteq\preceq_P,\ \D^-\cap\preceq_P=\varnothing\}.
        \]
        We call $\D$ \emph{satisfiable under $\B$} when $\V_{\B}(\D)\neq\varnothing$, and all statements below assume satisfiability. A binary answer rule is universally sound for $q=(a,b)$ if its output equals $\ind[a\preceq_P b]$ for every $P\in\V_{\B}(\D)$.
    
        \begin{proposition}[Sound-answer criterion]\label{supp:criterion}
            A universally sound binary answer exists for $(\B,\D,q)$ if and only if all posets in $\V_{\B}(\D)$ assign the same truth value to $q$.
        \end{proposition}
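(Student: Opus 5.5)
The plan is to prove the two directions separately, taking as the standing assumption that $\V_{\B}(\D)\neq\varnothing$. Under this assumption, the predicate ``all posets in the version space agree on $q$'' means exactly that the set $\{\ind[a\preceq_P b]:P\in\V_{\B}(\D)\}$ is a singleton. An answer rule is viewed as a map from the instance $(\B,\D,q)$ to an output in $\{0,1\}$. Crucially, this map receives only $(\B,\D,q)$ and not the hidden target $P$, so its output cannot vary with $P$.

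For sufficiency, suppose every $P\in\V_{\B}(\D)$ gives the same value $v=\ind[a\preceq_P b]$. The rule that outputs the constant $v$ on this instance then matches $\ind[a\preceq_P b]$ for every consistent $P$, which is exactly the definition of universal soundness. Nonemptiness ensures $v$ is well defined; an empty version space would make soundness vacuous.

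For necessity, I argue by contraposition. Suppose there are $P_0,P_1\in\V_{\B}(\D)$ with $\ind[a\preceq_{P_0}b]=0$ and $\ind[a\preceq_{P_1}b]=1$. The rule's output $o\in\{0,1\}$ is a single value determined by $(\B,\D,q)$ alone. If $o=0$, it is wrong for $P_1$; if $o=1$, it is wrong for $P_0$. So no universally sound rule exists.

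For the randomized case, the output is a distribution on $\{0,1\}$ that again depends only on $(\B,\D,q)$. Let $\pi=\Pr[o=1]$. The probabilities of being correct under $P_0$ and $P_1$ are $1-\pi$ and $\pi$, and these sum to $1$, so at most one can equal $1$. Hence no randomized rule is correct with certainty for both targets, and in fact $\min(1-\pi,\pi)\le 1/2$.

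The main obstacle is not technical but definitional. The proof needs a precise statement that an answer rule is a function of the observable prompt data only, possibly with randomness independent of $P$. Without this, a rule could ``peek'' at $P$ and the equivalence would fail. I will state this explicitly before giving the two directions.
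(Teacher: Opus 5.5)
Your proof is correct and follows the paper's argument: the constant rule returning the common value $v$ gives sufficiency, and two disagreeing consistent posets $P_0,P_1$ defeat any deterministic output for necessity. Your randomized case, where $\pi$ and $1-\pi$ sum to $1$, is just a probabilistic rephrasing of the paper's observation that every output in the rule's support is wrong on one of $P_0,P_1$.
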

        \begin{proof}
            If all consistent posets assign the common value $v\in\{0,1\}$, the rule that returns $v$ is correct for every member of the version space. Conversely, if the query is not identified, there are $P_0,P_1\in\V_{\B}(\D)$ with values zero and one, respectively. Every deterministic binary answer is therefore wrong on one of them. A randomized rule cannot guarantee correctness on both either: each realized output in its support is zero or one and is wrong on one of $P_0,P_1$. Thus universal, sure correctness is possible exactly under agreement of the version space. This statement concerns logical identifiability, not average-case accuracy under a distribution over targets.
        \end{proof}
    
    \section{Expanded Open-World Completion Proof}
        For the least-restrictive open-world background, let $\B$ contain all posets on $U$ and put $R=\TC(\D^+)$, where $\TC$ is reflexive transitive closure. Define
        \[
            \operatorname{Pred}_R(x)=\{u:uRx\},\qquad \operatorname{Succ}_R(x)=\{v:xRv\}.
        \]
        
        \begin{lemma}[Consistency of the least positive closure]\label{supp:least}
            If $\D$ is satisfiable, then $R$ is a partial order, $R\cap\D^-=\varnothing$, and $R$ is contained in every poset consistent with $\D$.
        \end{lemma}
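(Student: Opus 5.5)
The plan is to fix a witness poset and derive all three claims from the minimality of the reflexive transitive closure. Satisfiability gives some $P_0\in\V_{\B}(\D)$, with $\D^+\subseteq\preceq_{P_0}$ and $\D^-\cap\preceq_{P_0}=\varnothing$. I would first prove containment, since the other two properties follow from it.

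\emph{Containment.} Let $P$ be any poset consistent with $\D$. Its relation $\preceq_P$ is reflexive and transitive and contains $\D^+$. The reflexive transitive closure $R=\TC(\D^+)$ is the least such relation: it is the intersection of all reflexive transitive relations containing $\D^+$, or equivalently the set of pairs joined by a finite (possibly empty) chain of $\D^+$-steps. So $R\subseteq\preceq_P$. To be self-contained, I would argue by induction on the length $k$ of a chain $x=x_0,x_1,\ldots,x_k=y$ with each $(x_{i-1},x_i)\in\D^+$. The case $k=0$ is reflexivity of $P$, and the inductive step is transitivity of $P$.

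\emph{$R$ is a partial order.} By construction $R$ is reflexive and transitive, so only antisymmetry needs checking. If $xRy$ and $yRx$, then containment applied to $P_0$ gives $x\preceq_{P_0}y$ and $y\preceq_{P_0}x$. Antisymmetry of $P_0$ then gives $x=y$.

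\emph{Disjointness from $\D^-$.} If some $(x,y)\in R\cap\D^-$ existed, containment would put $(x,y)$ in $\preceq_{P_0}$, contradicting $\D^-\cap\preceq_{P_0}=\varnothing$.

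The only mildly delicate step is making the least-closure characterization explicit rather than taking it for granted. I would handle it with the chain description of $\TC$ and the induction above. Everything else is a one-line transfer of properties from $P_0$.
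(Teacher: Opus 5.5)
Your proof is correct and follows essentially the same route as the paper: containment in every consistent poset via the chain description of $R=\TC(\D^+)$, then antisymmetry and disjointness from $\D^-$ transferred from a witness poset that satisfiability provides. Your explicit induction on chain length only spells out the path argument that the paper states in a single line.
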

        \begin{proof}
            Reflexivity and transitivity hold by construction. Let $P$ be any consistent poset. Every $R$-comparison is witnessed by a path of positive demonstrations, so transitivity of $P$ gives $R\subseteq\preceq_P$. If distinct $x,y$ satisfied both $xRy$ and $yRx$, then $P$ would contain both comparisons, contradicting antisymmetry. Thus $R$ is antisymmetric. If $(x,y)\in R\cap\D^-$, every consistent $P$ would both contain and exclude $(x,y)$, contradicting satisfiability. The same path argument proves containment in every consistent poset.
        \end{proof}
        
        \begin{lemma}[One-edge closure formula]\label{supp:rectangle}
            Let $R$ be reflexive and transitive, and suppose $a{\notR}b$. Then
            \[
                \TC(R\cup\{(a,b)\})
                =R\cup\bigl(\operatorname{Pred}_R(a)\times
                         \operatorname{Succ}_R(b)\bigr).
            \]
            If $R$ is antisymmetric, the displayed closure is antisymmetric if and only if $b{\notR}a$.
        \end{lemma}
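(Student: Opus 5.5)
I would prove Lemma~\ref{supp:rectangle} in two parts, first the closure identity and then the antisymmetry criterion. Write $R'$ for the right-hand side $R\cup\bigl(\operatorname{Pred}_R(a)\times\operatorname{Succ}_R(b)\bigr)$.

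\emph{Inclusion $R'\subseteq\TC(R\cup\{(a,b)\})$.} The relation $R$ is contained in the closure trivially. For a rectangle pair $(x,y)$ we have $xRa$ and $bRy$, and transitivity applied along $x\,R\,a\to b\,R\,y$ places $(x,y)$ in any transitive relation containing $R\cup\{(a,b)\}$.

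\emph{Inclusion $\TC(R\cup\{(a,b)\})\subseteq R'$.} Since $\TC(\cdot)$ is the least reflexive transitive relation containing its argument, it suffices to show that $R'$ is reflexive, contains $R\cup\{(a,b)\}$, and is transitive. Reflexivity is inherited from $R$. The pair $(a,b)$ lies in the rectangle by reflexivity of $R$, since $a\in\operatorname{Pred}_R(a)$ and $b\in\operatorname{Succ}_R(b)$.

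For transitivity, take $(x,y),(y,z)\in R'$ and split into four cases.
\begin{enumerate}
\item Both pairs in $R$: then $(x,z)\in R$ by transitivity of $R$.
\item $(x,y)\in R$ and $(y,z)$ in the rectangle: from $xRy$ and $yRa$ we get $xRa$, so $(x,z)$ lies in the rectangle.
\item $(x,y)$ in the rectangle and $(y,z)\in R$: from $bRy$ and $yRz$ we get $bRz$, so $(x,z)$ lies in the rectangle.
\item Both pairs in the rectangle: $(x,z)$ uses $x$'s membership in $\operatorname{Pred}_R(a)$ and $z$'s membership in $\operatorname{Succ}_R(b)$ directly, so it lies in the rectangle.
\end{enumerate}
The hypothesis $a\notR b$ is not needed for the identity itself; it only ensures the added edge is new. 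This mirrors the argument already given inside the proof of Theorem~\ref{thm:trichotomy}.

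\emph{Antisymmetry criterion, assuming $R$ antisymmetric.} If $bRa$ with $a\neq b$, then $(a,b)$ and $(b,a)$ both lie in $R'$, so $R'$ is not antisymmetric. Note that $a\neq b$ is automatic, because $a\notR b$ together with reflexivity forces $a\neq b$.

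Conversely, suppose $b\notR a$ and that $R'$ contains $(x,y)$ and $(y,x)$ with $x\neq y$. Both pairs cannot lie in $R$, by antisymmetry of $R$. So at least one lies in the rectangle, and there are two cases.
\begin{itemize}
\item \emph{Exactly one in the rectangle}, say $xRa$, $bRy$ and $yRx$. Chaining gives $bRy$, $yRx$, $xRa$, hence $bRa$, a contradiction. The symmetric subcase, with the rectangle pair being $(y,x)$, is identical.
\item \emph{Both in the rectangle.} Then $yRa$ from the second pair and $bRy$ from the first, so again $bRa$, a contradiction.
\end{itemize}

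The main obstacle is this case analysis: it must cover the cases where the reversed pair lies in $R$ versus in the rectangle, and check that each produces a chain from $b$ to $a$ in $R$. Once the cases are enumerated as above, every branch reduces to one or two applications of transitivity of $R$.
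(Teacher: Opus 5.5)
Your proof is correct. For the closure identity it matches the paper's argument: you get the rectangle by chaining $xRa$, $(a,b)$, $bRy$, and you get the reverse inclusion by checking that $R'$ is reflexive, contains $(a,b)$, and is transitive, using the same four cases.

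You diverge from the paper only in the converse of the antisymmetry criterion. The paper works with the directed graph formed by the nonreflexive edges of $R$ together with the new edge $a\to b$. Since $R$ has no cycles on distinct vertices, any such cycle must traverse $a\to b$, and deleting that edge leaves an $R$-path from $b$ to $a$. Acyclicity then gives antisymmetry of the transitive closure. You instead use the explicit formula you had just proved and check pairs $(x,y),(y,x)\in R'$ with $x\neq y$ directly, reducing each case to $bRa$ in one or two transitivity steps.

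Your route is more self-contained. It never reasons about cycles in the augmented graph, and it never invokes the standard but unstated fact that the closure of a graph with no cycles on distinct vertices is antisymmetric. The paper's cycle argument, by contrast, does not use the closed form at all. It therefore applies verbatim to adding one edge to any acyclic relation, and it matches the ``adding $(a,b)$ creates a cycle'' reading used in the trichotomy.

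Your side remarks are also right: $a{\notR}b$ is not needed for the identity itself, and together with reflexivity it forces $a\neq b$.
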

        \begin{proof}
            Call the relation on the right $R'$. It contains $R$ and $(a,b)$ because $aRa$ and $bRb$. If $xRa$ and $bRy$, then the path $xRa$, $a\to b$, $bRy$ shows that $(x,y)$ belongs to the left-hand closure. Hence $R'\subseteq\TC(R\cup\{(a,b)\})$.
            
            For the reverse inclusion, it is enough to prove that $R'$ is reflexive and transitive. Reflexivity comes from $R$. Consider $(x,y),(y,z)\in R'$. If both lie in $R$, then $xRz$. If the first lies in $R$ and the second in the rectangle, then $xRy$ and $yRa$ imply $xRa$, while $bRz$ holds, so $(x,z)$ lies in the rectangle. If the first lies in the rectangle and the second in $R$, then $xRa$ and $bRyRz$, again giving a rectangle pair. If both lie in the rectangle, the first gives $xRa$ and the second gives $bRz$, which is sufficient. Thus $R'$ is transitive, and minimality of transitive closure gives the reverse inclusion.
            
            If $bRa$, then adding $aRb$ creates a two-way comparison between distinct elements, so antisymmetry fails. Conversely, suppose $b{\notR}a$ and consider the directed graph whose nonreflexive edges are those of $R$ together with the single new edge $a\to b$. Any directed cycle on distinct vertices that was not already present in $R$ must use $a\to b$; removing that edge gives an $R$-path from $b$ to $a$, contradicting $b{\notR}a$. The graph is therefore acyclic on distinct vertices, and its transitive closure is antisymmetric.
        \end{proof}
        
        \begin{theorem}[Open-world completion trichotomy]\label{supp:trichotomy}
            Let $U$ be finite, let $\B$ be the class of all posets on $U$, and let $\D=(\D^+,\D^-)$ be satisfiable. Put $R=\TC(\D^+)$. For a query $(a,b)$, exactly one of the following holds:
            \begin{enumerate}
                \item $aRb$, and the query is identified true;
                \item $a{\notR}b$ and either $bRa$ or $\bigl(\operatorname{Pred}_R(a)\times\operatorname{Succ}_R(b)\bigr)\cap\D^-\neq\varnothing$, and the query is identified false;
                \item neither condition holds, and the query is unidentifiable.
            \end{enumerate}
        \end{theorem}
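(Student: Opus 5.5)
The proof will be modular, resting on Lemma~\ref{supp:least} and Lemma~\ref{supp:rectangle}. First apply Lemma~\ref{supp:least} to the satisfiable prompt. It gives three facts: $R=\TC(\D^+)$ is a partial order, $R\cap\D^-=\varnothing$, and $R\subseteq\preceq_P$ for every $P\in\V_{\B}(\D)$. So the least completion $P^-:=(U,R)$ belongs to the version space.

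\emph{Case~1.} Suppose $aRb$. The containment $R\subseteq\preceq_P$ shows that every consistent poset satisfies $a\preceq_P b$. Hence the query is identified true.

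Now suppose $a\notR b$. Then $P^-$ is a consistent completion in which the query is false, so the query is identified exactly when no consistent $P$ has $a\preceq_P b$. Any such $P$ would contain $R\cup\{(a,b)\}$ and, being transitive, its closure. Since $R$ is reflexive and transitive, Lemma~\ref{supp:rectangle} identifies that closure as
\[
R_{a,b}:=R\cup\bigl(\operatorname{Pred}_R(a)\times\operatorname{Succ}_R(b)\bigr).
\]
Therefore a consistent true completion exists if and only if $R_{a,b}$ is antisymmetric and disjoint from $\D^-$. If it exists, $R_{a,b}$ itself is such a completion, because it is reflexive and transitive, contains $\D^+$, and avoids $\D^-$.

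\emph{Case~2.} Split on the two obstructions. If $bRa$ with $a\neq b$, then $R_{a,b}\subseteq\preceq_P$ would contain both $(a,b)$ and $(b,a)$, which contradicts antisymmetry. Note that $a=b$ is impossible here, since $aRa$ always holds and we are in the case $a\notR b$. If instead the rectangle meets $\D^-$, every true completion contains a negatively labeled pair. In either situation the only consistent answer is false, matching item~2.

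\emph{Case~3.} If neither obstruction holds, Lemma~\ref{supp:rectangle} applies with $b\notR a$ and gives antisymmetry of $R_{a,b}$. We already have $R\cap\D^-=\varnothing$, and by hypothesis the rectangle also misses $\D^-$, so $R_{a,b}\cap\D^-=\varnothing$. Then $P^+=(U,R_{a,b})$ and $P^-$ are two members of the version space that disagree on the query, so it is unidentifiable. The three cases are mutually exclusive and exhaustive by construction: item~1 is $aRb$, item~2 requires $a\notR b$ together with an obstruction, and item~3 is the remainder.

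The main obstacle is the disjointness of $R_{a,b}$ from $\D^-$. It is tempting to test only the new rectangle against $\D^-$, and this is valid only because Lemma~\ref{supp:least} already guarantees $R\cap\D^-=\varnothing$. I will state that dependence explicitly. I will also take care that antisymmetry is invoked in the form proved in Lemma~\ref{supp:rectangle}, which is acyclicity after adding one edge.
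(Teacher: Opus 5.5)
Your proposal is correct and follows the paper's proof essentially step for step. Lemma~\ref{supp:least} supplies the consistent false completion $P^-=(U,R)$ and the disjointness $R\cap\D^-=\varnothing$, and Lemma~\ref{supp:rectangle} identifies the least true extension $R_{a,b}$ and shows it is antisymmetric exactly when $b\notR a$. Your explicit ``if and only if'' criterion for the existence of a true completion repackages the same case analysis more tidily.
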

        \begin{proof}
            By Lemma~\ref{supp:least}, $P^-=(U,R)$ is a consistent poset. If $aRb$, every consistent poset contains $R$, so every one makes the query true.
            
            Assume $a{\notR}b$. Then $P^-$ is a consistent false completion. Any true completion must contain $R$, the comparison $a\preceq b$, and therefore every pair in
            $\operatorname{Pred}_R(a)\times\operatorname{Succ}_R(b)$ by transitivity. Lemma~\ref{supp:rectangle} shows that adding precisely this rectangle is already the least reflexive transitive true extension.
            
            If $bRa$, Lemma~\ref{supp:rectangle} shows that every true extension violates antisymmetry. If the rectangle intersects $\D^-$, every true extension violates a negative demonstration. Either obstruction therefore makes the query forced false.
            
            Finally suppose that neither obstruction occurs. Lemma~\ref{supp:rectangle} shows that
            \[
            P^+=\left(U,R\cup
            \bigl(\operatorname{Pred}_R(a)\times\operatorname{Succ}_R(b)\bigr)\right)
            \]
            is reflexive, transitive, and antisymmetric. Lemma~\ref{supp:least} says $R$ avoids $\D^-$, and the assumed empty intersection says the added pairs avoid $\D^-$ as well. Thus $P^+$ is a consistent true completion, while $P^-$ is a consistent false completion. The query is unidentifiable. The three cases are mutually exclusive and cover all possibilities.
        \end{proof}
        
        If $\D^-=\varnothing$, Theorem~\ref{supp:trichotomy} reduces to: true if $aRb$, false if $a\neq b$ and $bRa$, and unknown otherwise. Algorithmically, Floyd--Warshall gives $O(|U|^3)$ closure preprocessing. Afterward, a query costs $O(|\D^-|)$ naively by scanning negative pairs and testing membership in the predecessor--successor rectangle. With $|U|$-bit reachability and negative-adjacency bitsets, the rectangle test costs $O(|U|^2/w)$ word operations per query, where $w$ is the machine word size. This is a logical oracle, not a learned predictor.
        
    \section{Exact Finite-Universe Ambiguity Enumeration}
        Figure~1 of the main paper is an exhaustive consequence of the version-space definition, not a language-model experiment. Let $\mathcal{P}_4$ be the set of all 219 labeled posets on $U=\{1,2,3,4\}$, and let
        \[
            Q_4=\{(i,j)\in U^2:i\neq j\},\qquad |Q_4|=12.
        \]
        The four reflexive pairs are omitted because every poset labels them true, so observing them would add no information. For a target $P\in\mathcal{P}_4$ and an observed-pair set $S\subseteq Q_4$, define the complete target-consistent prompt on $S$ by
        \[
            \D_S(P)^+=S\cap\preceq_P,
            \qquad
            \D_S(P)^-=S\setminus\preceq_P.
        \]
        Its version space is
        \[
            \V(P,S)=\{P'\in\mathcal{P}_4:
            \preceq_{P'}\cap S=\preceq_P\cap S\}.
        \]
        For an unobserved query $q\in Q_4\setminus S$, write
        \[
            A(P,S,q)=
            \ind\!\left[
            \{\ind[q\in\preceq_{P'}]:P'\in\V(P,S)\}=\{0,1\}
            \right].
        \]
        Thus $A(P,S,q)=1$ exactly when the prompt leaves $q$ ambiguous. For prompt size $m\in\{0,\ldots,11\}$, the plotted quantity is
        \[
            \alpha_m=
            \frac{1}{219\binom{12}{m}(12-m)}
            \sum_{P\in\mathcal{P}_4}
            \sum_{\substack{S\subseteq Q_4\\|S|=m}}
            \sum_{q\in Q_4\setminus S}
            A(P,S,q).
        \]
        Every target, prompt subset, and unobserved query therefore receives equal weight. There is no random-poset generator, fitted parameter, or Monte Carlo error.
        
        \begin{center}
        \begin{minipage}{0.98\columnwidth}
        \centering
        \small
        \setlength{\tabcolsep}{5.2pt}
        \begin{tabular}{rcccccc}
            \toprule
            $m$ & 0 & 1 & 2 & 3 & 4 & 5 \\
            $\alpha_m$ & 1.0000 & .9726 & .9306 & .8772 & .8177 & .7566 \\
            \midrule
            $m$ & 6 & 7 & 8 & 9 & 10 & 11 \\
            $\alpha_m$ & .6968 & .6398 & .5863 & .5365 & .4902 & .4475 \\
        \bottomrule
        \end{tabular}
        \captionof{table}{Exact values plotted in Figure 1 of the main paper.}
        \label{tab:supp-ambiguity-values}
        \end{minipage}
        \end{center}
        
        The computation groups posets by their restrictions to the observed set $S$. Within one such version-space cell, an unobserved query is ambiguous precisely when its truth bit is not constant across the cell. Enumerating all $2^{12}$ observed-pair masks and all 219 posets therefore yields the exact counts without repeatedly solving a completion problem. The supplied script writes both the CSV values and the publication figure. These values illustrate the prevalence of the unknown outcome for one fully specified finite averaging scheme; they are not asserted to approximate a natural distribution over larger posets.
        
        \paragraph{Implementation details.}
            The bundled script uses Python 3.13.5 and integer bit masks for the exact enumeration; Matplotlib 3.10.8 is used only to render the figure. The supplied run used a CPU-only AMD EPYC 9V74 environment with nine logical cores and 5.9 GiB of visible memory. It completed in approximately 1.6 seconds with a peak resident set size of 164 MiB. The exact integer counts are independent of hardware, and no random seed is required.
        
            The following is the standard reachability interpretation of a complete Hasse diagram; it is recorded for self-containment.
        \begin{observation}[Closed-world reachability]\label{supp:closed}
            If a finite DAG $H=(U,E)$ is declared to be the complete Hasse diagram of a poset, then $a\preceq b$ if and only if $b\in\Reach_H(a)$, including the length-zero path for $a=b$.
        \end{observation}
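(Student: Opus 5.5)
We prove the two implications separately, writing $\preceq$ for the order generated by the declared Hasse diagram, i.e.\ the unique poset in $\B$ under closed-world semantics. The declaration that $H$ is the \emph{complete} Hasse diagram means two things. Every edge $(x,y)\in E$ is a cover of $\preceq$. Conversely, every cover of $\preceq$ appears as an edge of $E$. The first fact drives the easy direction and the second drives the converse.

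For the direction ``reachable implies comparable'', we argue by induction on the length $k$ of a directed path $a=x_0\to x_1\to\cdots\to x_k=b$ in $H$. When $k=0$ we have $a=b$, and reflexivity gives $a\preceq a$. For the inductive step, each edge $x_{i}\to x_{i+1}$ is a cover and therefore satisfies $x_i\preceq x_{i+1}$. Transitivity applied to $a\preceq x_{k-1}$ and $x_{k-1}\preceq b$ then yields $a\preceq b$.

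For the direction ``comparable implies reachable'', the case $a=b$ is handled by the length-zero path. So assume $a\prec b$ strictly. Among all chains $a=y_0\prec y_1\prec\cdots\prec y_k=b$, choose one of maximum length $k$. Such a maximum exists because $U$ is finite: the elements of a strict chain are distinct, by antisymmetry together with transitivity, so $k\le |U|-1$. We claim every consecutive pair $y_i\prec y_{i+1}$ is a cover. If it were not, there would be some $c$ with $y_i\prec c\prec y_{i+1}$, and inserting $c$ would give a strictly longer chain, contradicting maximality. Completeness of $H$ now puts every $(y_i,y_{i+1})$ in $E$, so the chain is a directed path from $a$ to $b$, i.e.\ $b\in\Reach_H(a)$.

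The only delicate point is the termination of the refinement, which is handled here by choosing a maximal-length saturated chain rather than by iterating the interval refinement of the main-text sketch. The bound $k\le|U|-1$ is exactly where finiteness is used. Without it, a dense order such as the rationals has no covers at all, so reachability in the (empty) Hasse diagram would fail to capture comparability. The completeness assumption, that every cover is displayed, is the other hypothesis that must be stated explicitly, since omitting a single cover edge breaks the converse direction.
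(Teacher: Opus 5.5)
Your proof is correct and follows the paper's argument: transitivity along a Hasse path gives one direction, and for the other you build a saturated chain of covers from $a$ to $b$, each link of which lies in $E$ by completeness. The only difference is that you obtain the saturated chain by choosing a maximum-length strict chain (bounded by $|U|-1$), whereas the paper refines intervals and argues termination because interval sizes strictly decrease; both use finiteness in the same way.
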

        \begin{proof}
            A directed Hasse path is a chain of cover relations, so transitivity implies comparability. Conversely, suppose $a\prec b$. If $(a,b)$ is a cover, it is an edge. Otherwise choose $c$ with $a\prec c\prec b$ and refine the two intervals. Each refinement strictly decreases the size of the corresponding finite interval, so the process terminates in a saturated chain $a=x_0\prec x_1\prec\cdots\prec x_k=b$ whose adjacent pairs are covers. This is a directed path in $H$. Reflexivity handles $a=b$.
        \end{proof}
        
        \begin{proposition}[Monotonicity under stronger background knowledge]\label{supp:background}
            Let $\B_1\subseteq\B_2$. Then $\V_{\B_1}(\D)\subseteq\V_{\B_2}(\D)$. Consequently, a query identified under $\B_2$ is identified with the same value under $\B_1$. For a fixed target $P\in\B_1$, the minimum teaching-set size relative to $\B_1$ is no greater than the minimum relative to $\B_2$.
        \end{proposition}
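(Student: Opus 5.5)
The plan is to prove the three claims of Proposition~\ref{supp:background} in order, each resting directly on the inclusion $\B_1\subseteq\B_2$.

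\textbf{Version-space inclusion.} I unfold the definition of the version space. A poset $P\in\V_{\B_1}(\D)$ lies in $\B_1\subseteq\B_2$, contains $\D^+$, and avoids $\D^-$. These last two conditions do not refer to the background theory, so $P\in\V_{\B_2}(\D)$.

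\textbf{Transfer of identifiability.} Suppose $q=(a,b)$ is identified under $\B_2$ with value $v$, so every $P\in\V_{\B_2}(\D)$ has $\ind[a\preceq_P b]=v$. By the inclusion just proved, every member of $\V_{\B_1}(\D)$ also has value $v$. The one point needing care is satisfiability: identification in the main-text definition is only defined for satisfiable prompts, and $\V_{\B_1}(\D)$ could be empty even when $\V_{\B_2}(\D)$ is not. I will therefore state the conclusion under the standing assumption of the supplement, namely that $\D$ is satisfiable under $\B_1$. Under that assumption the value set is a nonempty subset of $\{v\}$, hence exactly $\{v\}$. By Proposition~\ref{supp:criterion}, the same constant answer is universally sound under $\B_1$.

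\textbf{Teaching cost.} I define the teaching number relative to a background $\B$ as
\[
\tau_{\B}(P)=\min\{|\D|:\V_{\B}(\D)=\{P\}\},
\]
which generalizes $\tau_{\mathrm{ow}}$. I first check that this minimum is finite for $P\in\B_2$: labeling every pair of $U^2$ according to $P$ gives a prompt whose version space under any background containing $P$ is exactly $\{P\}$. Now take an optimal teaching set $\D$ for $P$ relative to $\B_2$. Then
\[
\V_{\B_1}(\D)\subseteq\V_{\B_2}(\D)=\{P\}.
\]
Since $P\in\B_1$ and $P$ is consistent with $\D$, we also have $P\in\V_{\B_1}(\D)$. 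Hence $\V_{\B_1}(\D)=\{P\}$, so $\D$ teaches $P$ relative to $\B_1$, and $\tau_{\B_1}(P)\le\tau_{\B_2}(P)$.

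\textbf{Main obstacle.} No step is technically hard. The only real issue is the edge case where the version space becomes empty under the smaller theory, so that identification would hold only vacuously. I handle it in the identifiability step by invoking the satisfiability convention. The teaching step avoids it automatically because of the hypothesis $P\in\B_1$.
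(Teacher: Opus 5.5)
Your proposal is correct and follows the paper's own argument: the version-space inclusion comes from unfolding the definition, agreement on $\V_{\B_2}(\D)$ passes to its subset $\V_{\B_1}(\D)$, and any label set isolating $P$ under $\B_2$ still isolates it under $\B_1$. Your two additions are invoking satisfiability under $\B_1$ (needed because the main-text definition of identification requires a nonempty version space) and checking that $P\in\V_{\B_1}(\D)$ via the hypothesis $P\in\B_1$. Both only make explicit what the paper's three-line proof leaves implicit.
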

        \begin{proof}
            The version-space inclusion follows directly because every hypothesis admitted by $\B_1$ is admitted by $\B_2$. Agreement on the larger version space implies agreement on its subset. Any label set isolating $P$ among $\B_2$ also isolates it among the smaller class $\B_1$, so restricting the background cannot increase the minimum teaching size.
        \end{proof}
    
    \section{Expanded Teaching-Number Proof}
        Let $\B_U$ be the class of all posets on a known finite universe $U$. A prompt teaches $P$ under open-world semantics if $\V_{\B_U}(\D)=\{P\}$. Define
        \[
            \tau_{\mathrm{ow}}(P)=\min\{|\D^+|+|\D^-|:\V_{\B_U}(\D)=\{P\}\}.
        \]
        The fixed universe is essential: if fresh elements are permitted, a finite prompt cannot isolate an antichain because another isolated element can be added. Let $\operatorname{Cov}(P)$ denote the strict covers. For an \emph{ordered} incomparable pair $a\parallel_P b$, define
        \[
            B_P(a,b)= \bigl(\operatorname{Pred}_P(a)\times\operatorname{Succ}_P(b)\bigr) \setminus\preceq_P.
        \]
        Let $\beta(P)$ be the minimum size of a set $N\subseteq U^2\setminus\preceq_P$ that intersects every $B_P(a,b)$. The ordered constraints $(a,b)$ and $(b,a)$ are counted separately.
        
        \begin{lemma}[Cover necessity]\label{supp:cover}
            Every strict cover $(x,y)$ of $P$ must appear as a positive label in every teaching set for $P$.
        \end{lemma}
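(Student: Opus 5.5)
The plan is a contradiction argument. Suppose $\D=(\D^+,\D^-)$ teaches $P$, so $\V_{\B_U}(\D)=\{P\}$, but some strict cover $(x,y)\in\operatorname{Cov}(P)$ is missing from $\D^+$. I would exhibit a second poset $P'\neq P$ in the version space, namely $P$ with only the pair $(x,y)$ removed:
\[
\preceq_{P'}:=\preceq_P\setminus\{(x,y)\}.
\]

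The first step is to check that $P'$ is a poset. Reflexivity holds because $x\neq y$, as $(x,y)$ is a strict cover. Antisymmetry is inherited, since $P'$ is a subrelation of $P$.

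Transitivity is the only substantive point. Take $(u,v),(v,w)\in\preceq_{P'}$. Then $(u,w)\in\preceq_P$ by transitivity of $P$, so the only possible failure is $(u,w)=(x,y)$, i.e. $x\preceq_P v\preceq_P y$.
\begin{itemize}
\item If $v=x$, then $(v,w)=(x,y)$, which was removed, contradicting $(v,w)\in\preceq_{P'}$.
\item If $v=y$, then $(u,v)=(x,y)$, again excluded.
\item Otherwise $x\prec_P v\prec_P y$, which contradicts $(x,y)$ being a cover.
\end{itemize}
Hence $P'$ is transitive.

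The second step is to check consistency with $\D$. Every positive label lies in $\preceq_P$ and, by assumption, is not $(x,y)$, so it lies in $\preceq_{P'}$. Every negative label is absent from $\preceq_P\supseteq\preceq_{P'}$, so it is absent from $P'$ as well. Thus $P'\in\V_{\B_U}(\D)$ and $P'\neq P$, contradicting $\V_{\B_U}(\D)=\{P\}$.

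This is a direct formalization of the first paragraph of the proof of Theorem~\ref{thm:teaching-exact}. The only step needing care is the case split in the transitivity check, and coverhood handles it exactly.
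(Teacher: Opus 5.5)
Your proposal is correct and follows the paper's proof of this lemma essentially step for step: delete only the cover pair $(x,y)$, use coverhood to rule out the one possible transitivity failure, and observe that the smaller relation remains consistent with every label. Your explicit case split on $v=x$ and $v=y$ matches the paper's remark that the intermediate vertex cannot be an endpoint, since one premise would then be the deleted pair.
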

        \begin{proof}
            Suppose the positive pair $(x,y)$ is omitted. Delete only this pair from $P$ and call the resulting relation $Q$. Reflexivity and antisymmetry are unchanged. If transitivity failed, there would be $u\preceq_Q v\preceq_Q w$ whose required pair is the only deleted pair, so $u=x$ and $w=y$. The intermediate vertex cannot be $x$ or $y$, because one of the two premises would then itself be the deleted pair. Hence $x\prec_P v\prec_P y$, contradicting that $(x,y)$ is a cover. Thus $Q$ is a poset. Every other positive label remains true, and every negative label remains false because $Q\subset P$. The prompt would therefore be consistent with $Q\neq P$, contradicting teaching.
        \end{proof}
        
        \begin{theorem}[Exact teaching characterization]\label{supp:teaching}
            For every finite poset $P$,
            \[
                \tau_{\mathrm{ow}}(P)=|\operatorname{Cov}(P)|+\beta(P).
            \]
        \end{theorem}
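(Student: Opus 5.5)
The plan is to prove the two inequalities separately. For the lower bound I would fix an arbitrary teaching set $\D=(\D^+,\D^-)$ with $\V_{\B_U}(\D)=\{P\}$ and show that it contains at least $|\operatorname{Cov}(P)|$ positive labels and at least $\beta(P)$ negative labels. For the upper bound I would build an explicit teaching set of exactly that size.

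\emph{Lower bound.} Lemma~\ref{supp:cover} shows that every strict cover lies in $\D^+$, so $|\D^+|\ge|\operatorname{Cov}(P)|$. For the negative part, every label in $\D^-$ must be false in $P$, so $\D^-\subseteq U^2\setminus\preceq_P$. I would then show that $\D^-$ meets every blocker set $B_P(a,b)$; minimality of $\beta(P)$ then gives $|\D^-|\ge\beta(P)$.

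Suppose instead that $\D^-\cap B_P(a,b)=\varnothing$ for some ordered incomparable pair $a\parallel_P b$. I would apply Lemma~\ref{supp:rectangle} with $R=\preceq_P$, which is reflexive, transitive and antisymmetric, and with $a\not\preceq_P b$ and $b\not\preceq_P a$. The lemma says
\[
P'=\preceq_P\cup\bigl(\operatorname{Pred}_P(a)\times\operatorname{Succ}_P(b)\bigr)
\]
is a partial order. It strictly contains $P$, since it contains $(a,b)$.

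Next I would check that $P'$ is consistent with $\D$:
\begin{itemize}
\item positive labels hold in $P\subseteq P'$;
\item a negative label lies outside $\preceq_P$, so it could lie in $P'$ only by belonging to the rectangle minus $\preceq_P$, which is exactly $B_P(a,b)$, and this is excluded.
\end{itemize}
Hence $P'\in\V_{\B_U}(\D)$ with $P'\neq P$, contradicting teaching.

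\emph{Upper bound.} Let $\D^+=\operatorname{Cov}(P)$ and let $\D^-=N$ be a minimum blocker hitting set. By Observation~\ref{supp:closed}, $\TC(\operatorname{Cov}(P))=\preceq_P$, so every consistent $Q$ contains $P$ (Lemma~\ref{supp:least}). Satisfiability holds because $P$ itself is consistent, as $N\cap\preceq_P=\varnothing$.

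Suppose $Q\supsetneq P$, and pick $(a,b)\in Q\setminus\preceq_P$. Then $b\not\preceq_P a$: otherwise $b\preceq_Q a$ and $a\preceq_Q b$ with $a\neq b$ would contradict antisymmetry of $Q$. So $a\parallel_P b$. Transitivity in $Q$ then forces the whole rectangle $\operatorname{Pred}_P(a)\times\operatorname{Succ}_P(b)$ into $Q$. In particular $Q$ contains the element of $N\cap B_P(a,b)$, violating a negative label. Therefore $\V_{\B_U}(\D)=\{P\}$, and $\tau_{\mathrm{ow}}(P)\le|\operatorname{Cov}(P)|+\beta(P)$.

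The step needing the most care is the consistency check for $P'$ in the lower bound. It must use the set difference in the definition of $B_P(a,b)$ correctly, together with the fact that negative labels of a teaching set are automatically outside $\preceq_P$. Edge cases are harmless: $\beta(P)=0$ when $P$ has no incomparable pairs, as for chains, and $\operatorname{Cov}(P)=\varnothing$ for antichains.
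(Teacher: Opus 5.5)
Your proposal is correct and follows essentially the same route as the paper's proof. It uses cover necessity via Lemma~S7, and blocker-hitting necessity via the one-edge extension of Lemma~S3 applied to $R=\preceq_P$. For sufficiency it shows that covers plus a minimum hitting set work, because any consistent $Q\supsetneq P$ must contain a whole rectangle $\operatorname{Pred}_P(a)\times\operatorname{Succ}_P(b)$ and hence a negative label. You also make explicit two points the paper leaves implicit: the negatives of any teaching set lie in $U^2\setminus\preceq_P$, so they are admissible in the definition of $\beta(P)$, and the constructed prompt is satisfiable because $P$ itself is consistent.
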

        \begin{proof}
            For the lower bound, Lemma~\ref{supp:cover} requires all covers as positive labels. Let $N$ be the negative labels of any teaching set. Suppose $N$ misses $B_P(a,b)$ for an ordered incomparable pair. Apply Lemma~\ref{supp:rectangle} with $R=\preceq_P$. Since $a\parallel_P b$, in particular $b\not\preceq_P a$, so the extension
            \[
                P_{a,b}=P\cup \bigl(\operatorname{Pred}_P(a)\times\operatorname{Succ}_P(b)\bigr)
            \]
            is a poset. Its newly added pairs are exactly contained in $B_P(a,b)$, and by assumption none is in $N$. It contains all positive labels and violates no negative label, contradicting uniqueness. Hence $N$ hits every blocker set, so $|N|\geq\beta(P)$ and $\tau_{\mathrm{ow}}(P)\geq|\operatorname{Cov}(P)|+\beta(P)$.
        
            For the upper bound, label every cover positive and choose a minimum blocker hitting set $N$ as negative. In a finite poset, every strict comparison lies on a saturated chain of covers; therefore every consistent poset $Q$ contains all of $P$. If $Q\neq P$, choose $(a,b)\in Q\setminus P$. The reverse pair cannot belong to $P$, because then $Q$ would contain both directions between distinct elements and violate antisymmetry. Thus $a\parallel_P b$. Since $Q$ contains $P$ and $a\preceq_Q b$, transitivity forces every pair in $\operatorname{Pred}_P(a)\times\operatorname{Succ}_P(b)$, including every member of $B_P(a,b)$. The hitting set contains some negative label in this blocker set, contradicting consistency of $Q$. Hence $Q=P$, proving the matching upper bound.
        \end{proof}
        
        \begin{corollary}[General bounds, exact extrema, and semantic cost]\label{supp:teaching-values}
            If $I(P)$ is the number of unordered incomparable pairs, then
            \[
                |\operatorname{Cov}(P)|\leq\tau_{\mathrm{ow}}(P)\leq|\operatorname{Cov}(P)|+2I(P).
            \]
            For the $n$-element chain $C_n$ and antichain $A_n$,
            \[
                \tau_{\mathrm{ow}}(C_n)=n-1,\qquad \tau_{\mathrm{ow}}(A_n)=n(n-1).
            \]
            Moreover,
            \[
                \max_{P\text{ on }n\text{ elements}}\tau_{\mathrm{ow}}(P)=n(n-1),
            \]
            with equality uniquely for $A_n$. If $\tau_{\mathrm{cw}}(P)$ counts the cover edges needed when a prompt is declared to be the complete Hasse diagram, then
            \[
                \tau_{\mathrm{cw}}(P)=|\operatorname{Cov}(P)|, \qquad \tau_{\mathrm{ow}}(P)-\tau_{\mathrm{cw}}(P)=\beta(P).
            \]
        \end{corollary}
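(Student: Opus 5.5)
The plan is to derive every part of Corollary~\ref{supp:teaching-values} from Theorem~\ref{supp:teaching}. Each claim then reduces to estimating $\beta(P)$.

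\textbf{General bounds.} The lower bound $\tau_{\mathrm{ow}}(P)\geq|\operatorname{Cov}(P)|$ is immediate from Lemma~\ref{supp:cover}, or equivalently from $\beta(P)\geq0$. For the upper bound, I observe that each blocker set contains its own defining pair: $(a,b)\in\operatorname{Pred}_P(a)\times\operatorname{Succ}_P(b)$ by reflexivity, and $(a,b)\notin\preceq_P$ since $a\parallel_P b$. Hence the set of all ordered incomparable pairs is an admissible $N\subseteq U^2\setminus\preceq_P$ hitting every $B_P(a,b)$. Its size is $2I(P)$, so $\beta(P)\leq 2I(P)$.

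\textbf{Chain and antichain.} For $C_n$, $I=0$, so $\beta=0$, and there are $n-1$ covers. For $A_n$, $\operatorname{Pred}(a)=\{a\}$ and $\operatorname{Succ}(b)=\{b\}$, so $B(a,b)=\{(a,b)\}$ for each of the $n(n-1)$ distinct ordered pairs. These singletons are pairwise disjoint, so any hitting set must contain all of them. This gives $\beta=n(n-1)$, and $\operatorname{Cov}(A_n)=\varnothing$.

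\textbf{Class maximum.} Each cover is a distinct unordered comparable pair, and there are $\binom n2-I(P)$ such pairs. Combining with the upper bound,
\[
\tau_{\mathrm{ow}}(P)\leq\tbinom n2-I(P)+2I(P)=\tbinom n2+I(P)\leq n(n-1).
\]
Equality forces $I(P)=\binom n2$, so $P$ is the antichain. The antichain attains the bound by the computation above, which gives uniqueness.

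\textbf{Closed-world cost.} This is the only step needing care, because $\tau_{\mathrm{cw}}$ must be read correctly. The model is that a displayed edge set $E$ is declared to be the complete Hasse diagram, so the specified poset is $\TC(E)$, and $E$ must be exactly the cover relation. Sufficiency follows from Observation~\ref{supp:closed}: displaying $\operatorname{Cov}(P)$ yields $P$ by reachability. Necessity: if some cover $(x,y)$ were omitted, the reachability closure would be the poset $Q$ of Lemma~\ref{supp:cover}, which differs from $P$. Alternatively, omission contradicts the completeness declaration directly. So $\tau_{\mathrm{cw}}(P)=|\operatorname{Cov}(P)|$, and subtracting from Theorem~\ref{supp:teaching} gives $\tau_{\mathrm{ow}}-\tau_{\mathrm{cw}}=\beta(P)$.

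The main obstacle is purely definitional: pinning down $\tau_{\mathrm{cw}}$ so that the necessity of every cover is justified rather than assumed. I would do this by invoking the cover-deletion poset $Q$ from Lemma~\ref{supp:cover}.
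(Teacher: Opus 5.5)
Your proposal is correct and follows the paper's proof essentially step for step. Blocker sets contain their defining pair, which gives $\beta(P)\le 2I(P)$; the antichain blockers are disjoint singletons; the count $|\operatorname{Cov}(P)|\le\binom{n}{2}-I(P)$ gives the class maximum and its uniqueness; and the closed-world value comes from sufficiency and necessity of the covers under complete-Hasse semantics.

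There is one small slip in that last step. After omitting a cover $(x,y)$, the closure of the remaining covers is in general only \emph{contained in} the poset $Q$ of Lemma~S7, not equal to it: in the chain $a\prec b\prec c$, dropping $(a,b)$ also loses $(a,c)$. Containment already excludes $(x,y)$, so your conclusion that the generated poset differs from $P$ still stands.
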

        \begin{proof}
            For every ordered incomparable pair $(a,b)$, reflexivity gives $(a,b)\in B_P(a,b)$. Selecting all ordered incomparable pairs is therefore a hitting set of size $2I(P)$, proving the upper bound; the cover term gives the lower bound. A chain has $n-1$ covers and no incomparable pairs, including the edge case $n=1$. In an antichain, $B_P(a,b)=\{(a,b)\}$ for every distinct ordered pair, so all $n(n-1)$ negative labels are necessary and there are no covers.
        
            For any $n$-element poset, the number of unordered comparable pairs is $\binom{n}{2}-I(P)$ and covers form a subset of them. Therefore
            \[
                \tau_{\mathrm{ow}}(P) \leq |\operatorname{Cov}(P)|+2I(P) \leq \binom{n}{2}+I(P) \leq n(n-1).
            \]
            If equality holds, then the last inequality forces $I(P)=\binom{n}{2}$, so every distinct pair is incomparable and $P=A_n$. The antichain calculation shows attainability.
        
            Under complete-Hasse semantics, displaying all covers is sufficient because their reflexive transitive closure is $P$. Every cover is necessary: omitting it changes the declared complete Hasse diagram and therefore the generated poset. Thus $\tau_{\mathrm{cw}}(P)=|\operatorname{Cov}(P)|$. Theorem~\ref{supp:teaching} then gives the difference $\beta(P)$.
        \end{proof}
        
        The quantity $\beta(P)$ is a minimum hitting-set instance. Minimum hitting set is NP-hard in general, but we do not determine the complexity of the structured blocker-set instances induced by posets.
        
        \section{Order Dimension and the Conditional Decoder Bound}
        
        Order dimension is classical \cite{dushnik1941posets,trotter1992dimension}. The following equivalence is classical as well; see \citet[Chapter~1]{trotter1992dimension}. We record the exact form because the prompt-dependent decoder corollary uses both directions.
        
        \begin{theorem}[Coordinate-order equivalence]\label{supp:coordinate}
        For a nonempty finite poset $P$ and an integer $s\geq1$, the following are equivalent:
        \begin{enumerate}
        \item there are maps $\phi_i:U\to\mathbb{R}$, $i\in[s]$, such that
        \[
        x\preceq_P y\quad\Longleftrightarrow\quad
        \phi_i(x)\leq\phi_i(y)\quad\text{for every }i;
        \]
        \item $\dimposet(P)\leq s$.
        \end{enumerate}
        \end{theorem}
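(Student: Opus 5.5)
I will prove the two implications separately, with the Dushnik--Miller definition of dimension as the bridge: $\dimposet(P)\leq s$ means there are linear extensions $L_1,\ldots,L_t$ of $P$ with $t\leq s$ and $\preceq_P=\bigcap_i L_i$. Throughout I use the paper's convention that $P$ is nonempty and finite on the known universe $U$. I also use the fact that every finite poset has at least one linear extension $L_0$, obtained from a topological sort of its strict part.

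For (1)$\Rightarrow$(2), I turn each coordinate into a linear order. Given $\phi_1,\ldots,\phi_s$, define $L_i$ by $x<_{L_i}y$ when either $\phi_i(x)<\phi_i(y)$, or $\phi_i(x)=\phi_i(y)$ and $x<_{L_0}y$. This lexicographic comparison is a strict linear order on $U$, because $L_0$ is linear and $\mathbb{R}$ is totally ordered.

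Next I check that each $L_i$ extends $P$. Take $x\prec_P y$. The representation gives $\phi_i(x)\leq\phi_i(y)$. If the inequality is strict, $x$ precedes $y$ in $L_i$. If $\phi_i(x)=\phi_i(y)$, the tie is broken by $L_0$, and $L_0$ places $x$ before $y$ since it extends $P$. So $P\subseteq\bigcap_i L_i$.

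For the reverse inclusion, take $x\not\preceq_P y$, so $x\neq y$. The biconditional gives some $j$ with $\phi_j(x)>\phi_j(y)$. Then $y<_{L_j}x$, so $(x,y)\notin L_j$. Hence $\bigcap_i L_i=\preceq_P$ as reflexive relations. Discarding duplicate orders leaves a realizer of size at most $s$, which gives $\dimposet(P)\leq s$.

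For (2)$\Rightarrow$(1), take a realizer $L_1,\ldots,L_t$ with $t\leq s$; one exists by definition of dimension and finiteness. Pad it to length $s$ by repeating $L_1$, so $L_{t+1}=\cdots=L_s=L_1$. Repetition does not change the intersection. Set $\phi_i(x)$ equal to the rank of $x$ in $L_i$. Because $L_i$ is a linear order, $\phi_i(x)\leq\phi_i(y)$ holds exactly when $(x,y)\in L_i$, including the case $x=y$. Therefore the coordinatewise condition is equivalent to $(x,y)\in\bigcap_i L_i=\preceq_P$, which is statement (1).

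The only delicate point is the tie-breaking in the first direction. Sorting by $\phi_i$ alone does not give a linear order, and an arbitrary tie-break could reverse a comparable pair. I would state explicitly that ties among comparable elements are resolved by $L_0$, and that antisymmetry of $L_0$ makes the lexicographic order total and antisymmetric. A second point to state is that strict reversal in some coordinate is exactly what removes the pair from the intersection. The rest is routine.
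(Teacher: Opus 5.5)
Your proposal is correct and follows the paper's proof essentially step for step. In the forward direction you sort by each coordinate, break ties with a fixed linear extension $L_0$, and use a strict reversal in some coordinate to remove non-comparable pairs from the intersection; in the converse you pad a realizer by repetition and take ranks as coordinates, with only the cosmetic addition of explicitly discarding duplicate orders.
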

        \begin{proof}
        Assume the coordinate representation. Fix a linear extension $L_0$ of $P$. For each $i$, order the elements by increasing $\phi_i$ and break ties according to $L_0$; call the resulting total order $L_i$. If $x\preceq_P y$, then $\phi_i(x)\leq\phi_i(y)$ for every $i$. A strict inequality puts $x$ before $y$ directly, and an equality is resolved in the same direction by $L_0$. Thus every $L_i$ extends $P$.
        
        If $x\not\preceq_P y$, the biconditional in the coordinate representation implies that some $j$ satisfies $\phi_j(x)>\phi_j(y)$. Hence $L_j$ places $y$ before $x$, so $(x,y)$ is absent from the intersection of the $L_i$. We have shown both inclusions
        $P\subseteq\bigcap_i L_i$ and $\bigcap_i L_i\subseteq P$. Therefore the $L_i$ form a realizer and $\dimposet(P)\leq s$.
        
        Conversely, if $\dimposet(P)=t\leq s$, take a $t$-member realizer and repeat one member until $s$ orders $L_1,\ldots,L_s$ are listed. Let $\phi_i(x)$ be the rank of $x$ in $L_i$. Then $\phi_i(x)\leq\phi_i(y)$ for all $i$ exactly when $x$ precedes $y$ in every realizer order, which is equivalent to $x\preceq_P y$.
        \end{proof}
        
        Let $\mathsf{Prompt}(U)$ be the prompts over $U$, and fix an integer $s\geq1$. An $s$-coordinate decoder is a fixed map $\Phi:\mathsf{Prompt}(U)\to(\mathbb{R}^U)^s$ with the conjunctive coordinatewise decision rule. A task family $\mathcal{T}\subseteq\mathsf{Prompt}(U)\times\mathsf{Poset}(U)$ is functional in its first coordinate: $(\D,P),(\D,Q)\in\mathcal{T}$ implies $P=Q$.
        
        \begin{corollary}[Exact coordinate-decoder capability boundary]\label{supp:decoder-boundary}
        An exact $s$-coordinate decoder exists on $\mathcal{T}$ if and only if every target $P_{\D}$ in the family satisfies $\dimposet(P_{\D})\leq s$. In particular, width at most $s$ for every target is sufficient, whereas one target of dimension greater than $s$ makes exact decoding impossible.
        \end{corollary}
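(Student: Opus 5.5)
The argument reduces both directions to Theorem~\ref{supp:coordinate}, applied one prompt at a time. Functionality of $\mathcal{T}$ is what makes this possible: each prompt $\D$ that occurs in the family has a single target $P_{\D}$, so "exactness on $\mathcal{T}$" decomposes into independent exactness conditions, one for each occurring prompt.

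\emph{Necessity.} Assume $\Phi$ is exact on $\mathcal{T}$ and fix $(\D,P_{\D})\in\mathcal{T}$. Exactness states that $A_\Phi(\D;x,y)=\ind[x\preceq_{P_{\D}}y]$ for all $(x,y)\in U^2$. Unfolding the conjunctive rule, this says $x\preceq_{P_{\D}}y$ holds exactly when $\phi_i^{\D}(x)\le\phi_i^{\D}(y)$ for every $i\in[s]$. So $(\phi_1^{\D},\ldots,\phi_s^{\D})$ is an $s$-coordinate order representation of $P_{\D}$. The implication (1)$\Rightarrow$(2) of Theorem~\ref{supp:coordinate} then gives $\dimposet(P_{\D})\le s$. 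If $U$ is empty, every statement holds vacuously, so the nonemptiness hypothesis of that theorem costs nothing.

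\emph{Sufficiency.} Assume every target satisfies $\dimposet(P_{\D})\le s$. For each prompt $\D$ occurring in $\mathcal{T}$, implication (2)$\Rightarrow$(1) of Theorem~\ref{supp:coordinate} gives maps $\phi_1^{\D},\ldots,\phi_s^{\D}$ representing $P_{\D}$; that proof already pads a smaller realizer by repeating one member, so exactly $s$ coordinates are available. Define $\Phi(\D)$ to be this tuple when $\D$ occurs in $\mathcal{T}$, and set it to any fixed element of $(\mathbb{R}^U)^s$ (say all zeros) otherwise. Functionality guarantees this is well defined, since a prompt never has to represent two different targets. Because $\mathsf{Prompt}(U)$ is a set and each choice is made independently, $\Phi$ is a single fixed map. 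Exactness on $\mathcal{T}$ is the representation property applied pointwise.

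\emph{Consequences.} For the width criterion, use the classical Dilworth-based bound $\dimposet(P)\le\width(P)$ \cite{dilworth1950decomposition,trotter1992dimension}. This turns $\width(P_{\D})\le s$ for all targets into the dimension hypothesis, and sufficiency applies. The impossibility claim is the contrapositive of necessity: if some target has dimension greater than $s$, no exact decoder can exist.

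I do not expect a real obstacle; the only delicate point is bookkeeping. The decoder must be one map defined on all of $\mathsf{Prompt}(U)$, and it is exactly the functionality of $\mathcal{T}$ that keeps the per-prompt choices from conflicting. I will also stress that this construction is nonconstructive and says nothing about recovering the coordinates from the demonstrations.
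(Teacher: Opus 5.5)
Your proof is correct and follows essentially the same approach as the paper. Necessity reads the decoder's coordinates at each prompt as an $s$-coordinate representation and applies Theorem~\ref{supp:coordinate}; sufficiency chooses a padded realization for each occurring prompt, is well defined by functionality, extends $\Phi$ arbitrarily elsewhere, and the width and impossibility clauses follow from $\dimposet(P)\le\width(P)$ and the contrapositive — your empty-universe remark is just an extra edge-case check the paper omits.
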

        \begin{proof}
        If a decoder is exact, its coordinates at prompt $\D$ form an $s$-coordinate representation of $P_{\D}$, so Theorem~\ref{supp:coordinate} gives $\dimposet(P_{\D})\leq s$. Conversely, if every target has dimension at most $s$, choose an $s$-coordinate realization for each target, padding by repeated coordinates when needed. Define the fixed map $\Phi$ to return that realization on prompts occurring in $\mathcal{T}$ and extend it arbitrarily to all other prompts. The width statement follows from the classical inequality $\dimposet(P)\leq\width(P)$ \cite{dilworth1950decomposition,trotter1992dimension}.
        \end{proof}
        
        This is an exact, zero-error representation boundary for the specified decoder form. The sufficiency argument is not an efficient learning algorithm, and approximate decoders and unrestricted neural representations are outside the statement.
        
        \begin{proposition}[Standard poset families and finite divisibility]\label{supp:families}
        For $n,m,r\geq1$:
        \begin{enumerate}
        \item a chain $C_n$ has dimension one;
        \item the Boolean lattice $B_m=(2^{[m]},\subseteq)$ has dimension $m$;
        \item if $N=\prod_{i=1}^r p_i^{e_i}$ with distinct primes and $e_i\geq1$, then the positive-divisor poset of $N$ has dimension $r$;
        \item if $D_n=([n],\mid)$ and the product of the first $r$ primes is at most $n$, then $\dimposet(D_n)\geq r$, so these dimensions are unbounded.
        \end{enumerate}
        \end{proposition}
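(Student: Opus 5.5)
The plan is to treat the four items in order, using the coordinate-order equivalence (Theorem~\ref{supp:coordinate}) for every upper bound and the monotonicity of dimension under induced subposets for every lower bound.

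For item~1, a chain is its own linear extension, so the realizer $\{C_n\}$ gives dimension one.

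For item~2, the upper bound comes from the $m$ indicator maps $\phi_i(A)=\ind[i\in A]$. These satisfy $A\subseteq B$ if and only if $\phi_i(A)\le\phi_i(B)$ for every $i$, so Theorem~\ref{supp:coordinate} gives $\dimposet(B_m)\le m$.

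The lower bound is the main step. I would use the standard-example argument with $a_i=\{i\}$ and $b_i=[m]\setminus\{i\}$.
\begin{itemize}
\item For $m\ge3$, $a_i\subseteq b_j$ holds exactly when $i\ne j$, and each pair $a_i\parallel b_i$ is incomparable. A realizer must contain, for each $i$, some linear extension placing $b_i$ below $a_i$.
\item No single extension can do this for two indices $i\ne j$. If it did, we would have the cycle $b_i<a_i\le b_j<a_j\le b_i$, where $a_i\le b_j$ and $a_j\le b_i$ are forced by the order.
\item Hence at least $m$ extensions are needed.
\end{itemize}
The small cases need separate treatment because the comparabilities above degenerate when $m\le2$. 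For $m=1$, $B_1$ is a chain and item~1 applies. For $m=2$, $B_2$ is not a chain, so its dimension is at least two, and the two indicator maps give the matching upper bound.

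For item~3, I would map each divisor $d$ to its exponent vector $(v_{p_1}(d),\dots,v_{p_r}(d))$. Divisibility is exactly coordinatewise $\le$ on these vectors, so Theorem~\ref{supp:coordinate} gives $\dimposet\le r$. For the lower bound, the squarefree divisors $\prod_{i\in A}p_i$ with $A\subseteq[r]$ induce a copy of $B_r$. I would justify monotonicity under subposets directly: restricting every member of a realizer to the subset gives a realizer of the induced subposet. Item~2 then gives $\dimposet\ge r$.

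For item~4, let $M=p_1\cdots p_r\le n$. All squarefree divisors of $M$ lie in $[n]$, and divisibility among them is the divisor order of $M$. So $D_n$ contains an induced copy of $B_r$, and its dimension is at least $r$. Unboundedness follows because for each fixed $r$ the product $p_1\cdots p_r$ is finite, so every sufficiently large $n$ exceeds it.

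The only genuinely delicate point is the cycle argument in the Boolean lower bound, together with checking that the small cases $m\le2$ are handled correctly. Everything else is routine use of Theorem~\ref{supp:coordinate} and subposet restriction.
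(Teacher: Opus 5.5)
Your proposal is correct and follows the paper's proof essentially step for step. Both use the membership-indicator and exponent-vector coordinates with Theorem~\ref{supp:coordinate} for the upper bounds, the standard example $a_i=\{i\}$, $b_i=[m]\setminus\{i\}$ with the one-reversal-per-extension cycle argument for $m\geq 3$, the same separate handling of $m\leq 2$, and restriction of a realizer to the induced copy of $B_r$ for items 3 and 4. Your non-strict $a_i\leq b_j$ in the cycle is a harmless and slightly more robust variant of the paper's strict chain, since the contradiction only needs the strict reversals $b_i<a_i$ and $b_j<a_j$.
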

        \begin{proof}
        A nonempty chain is already a linear order, so its dimension is one.
        
        For $B_m$, membership indicators give an $m$-coordinate representation, so Theorem~\ref{supp:coordinate} yields the upper bound. The case $m=1$ is a chain. The lattice $B_2$ is not a chain, so its dimension is at least two, matching the upper bound. Assume $m\geq3$ and define $a_i=\{i\}$ and $b_i=[m]\setminus\{i\}$. Then $a_i\subseteq b_j$ when $i\neq j$, while $a_i$ and $b_i$ are incomparable. A single linear extension cannot reverse both pairs $(a_i,b_i)$ and $(a_j,b_j)$ for $i\neq j$, because the required relations would give
        \[
        b_i<a_i<b_j<a_j<b_i.
        \]
        For each incomparable pair $(a_i,b_i)$, some member of any realizer must place $b_i$ before $a_i$; otherwise $a_i\leq b_i$ would survive in the intersection. Hence at least $m$ linear extensions are needed, proving $\dimposet(B_m)=m$.
        
        For divisors of $N$, map $d$ to the exponent vector
        $(v_{p_1}(d),\ldots,v_{p_r}(d))$. Divisibility is exactly coordinatewise comparison, giving dimension at most $r$. The squarefree divisors $\prod_{i\in A}p_i$ induce a copy of $B_r$. Order dimension is monotone under subposets because restricting every linear extension in a realizer gives a realizer of the subposet. Thus the divisor poset has dimension at least $r$ and therefore exactly $r$.
        
        Finally, if $M=p_1\cdots p_r\leq n$, every squarefree divisor of $M$ lies in $[n]$ and the induced divisibility subposet is $B_r$. Monotonicity gives $\dimposet(D_n)\geq r$. Since primorials are finite for every fixed $r$, the dimensions are unbounded as $n$ grows. Sharper asymptotics are known \cite{lewis2021divisibility}.
        \end{proof}
        
        \paragraph{Structural-profile entries.}
            For a chain $C_n$, the longest cover path and the largest reachable set associated with a false query both have size $n-1$. For $B_m$, covers add one ground-set element, giving $m2^{m-1}$ covers and maximum positive path length $m$; a false query with a singleton left set has $2^{m-1}$ reachable supersets, which is maximal. For $N=\prod_i p_i^{e_i}$, exponent vectors show height $1+\sum_i e_i$, cover count $\sum_i e_i\prod_{j\neq i}(e_j+1)$, maximum positive path length $\sum_i e_i$, and maximum negative witness size $\max_i e_i\prod_{j\neq i}(e_j+1)$. For $D_n=([n],\mid)$, the chain $1,2,4,\ldots$ gives height $\lfloor\log_2 n\rfloor+1$ and maximum positive length $\lfloor\log_2 n\rfloor$; the numbers greater than $n/2$ form an antichain of size $\lceil n/2\rceil$; covers are exactly pairs $(a,ap)$ with $p$ prime, giving $\sum_{p\leq n}\lfloor n/p\rfloor$; and a false query from $a=2$ has $\lfloor n/2\rfloor$ reachable multiples, which is maximal among $a\geq2$.
    
    \section{Expanded Certificate Proofs}
        Let $H(P)$ be the complete Hasse DAG of a finite poset. For $a\preceq_P b$, let $\lambda_P^+(a,b)$ be the length of the shortest directed cover path from $a$ to $b$, with length zero when $a=b$.
    
        \begin{proposition}[Positive chain-certificate bound]\label{supp:positive-cert}
            Any positive certificate that consists only of demonstrated cover edges has at least $\lambda_P^+(a,b)$ edges. A procedure required to produce or verify a cover path of length at most $T$ therefore cannot certify every positive query with $\lambda_P^+(a,b)>T$.
        \end{proposition}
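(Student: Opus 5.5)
The proof will make precise what a ``certificate consisting only of demonstrated cover edges'' is, and then reduce the claim to the definition of $\lambda_P^+(a,b)$. We model such a certificate for a positive query $(a,b)$ as a finite sequence $a=x_0,x_1,\ldots,x_k=b$ with each $(x_{i-1},x_i)\in\operatorname{Cov}(P)$. Its validity rests only on transitivity composing these cover steps; reflexivity handles the degenerate case $k=0$, $a=b$. The size of the certificate is $k$, the number of edges used.

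The first step is to observe that any such certificate is a directed walk from $a$ to $b$ in the Hasse DAG $H(P)$, since every edge of $H(P)$ is a cover and conversely. In a DAG every walk is a path, because a repeated vertex would produce a directed cycle. So the certificate is a directed Hasse path of length $k$. By definition $\lambda_P^+(a,b)$ is the minimum length over all such paths, so $k\ge\lambda_P^+(a,b)$. The minimum exists and is finite: the set of paths is nonempty by Observation~\ref{supp:closed} (every strict comparison lies on a saturated cover chain), and it is a finite set because $U$ is finite. The bound is attained by a shortest path, which is also worth noting.

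The consequence then follows by contraposition. If a procedure is restricted to producing or verifying cover paths of length at most $T$, then for any positive query with $\lambda_P^+(a,b)>T$ every valid certificate has length greater than $T$ by the first part. No admissible certificate therefore exists for that query.

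For concreteness we can exhibit such queries. In the chain $C_n$ the pair (bottom, top) has $\lambda^+=n-1$, so these queries exist whenever $T<n-1$. The only step needing care is the modeling of ``certificate'': we state explicitly that only demonstrated cover edges may be chained, and that non-cover comparisons and global reachability summaries are excluded. That exclusion is what separates proof length from unrestricted computation, and once it is fixed the argument is immediate.
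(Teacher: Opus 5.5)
Your proposal is correct and follows the paper's own argument. A certificate built from cover edges is a directed path in $H(P)$, so it has at least $\lambda_P^+(a,b)$ edges, and the second claim follows from the restriction on admissible witnesses. Your extra points are careful elaborations of the paper's brief proof, not a different route: you note that walks in a DAG are paths, and you justify the existence of the minimum via the closed-world reachability observation and finiteness.
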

        \begin{proof}
            A certificate made only of cover edges is a directed path in the Hasse DAG. By definition, $\lambda_P^+(a,b)$ is the minimum number of edges among all such paths. No shorter valid chain certificate exists. The second statement follows from the restriction on the permitted witness. It does not rule out a different global reachability algorithm.
        \end{proof}
    
        Call $S\subseteq U$ forward closed if no Hasse edge has its tail in $S$ and its head outside $S$.
    
        The following elementary graph observation is recorded because it yields the canonical negative witness.
        \begin{lemma}[Forward-closed nonreachability witness]\label{supp:negative}
            There is no directed path from $a$ to $b$ in a finite Hasse DAG if and only if there exists a forward-closed set $S$ with $a\in S$ and $b\notin S$.
        \end{lemma}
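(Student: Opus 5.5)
I will prove the two directions separately, following the main-paper Lemma~\ref{thm:negative} but with the path argument made explicit by induction on path length, as the supplement promises.

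For the ``if'' direction, suppose $S$ is forward closed with $a\in S$ and $b\notin S$. I will show by induction on $k$ that every vertex at the end of a directed path of length $k$ starting at $a$ lies in $S$. The base case $k=0$ is $a\in S$. For the inductive step, let $x_0=a,x_1,\ldots,x_{k+1}$ be a path. The induction hypothesis gives $x_k\in S$. The edge $(x_k,x_{k+1})$ has its tail in $S$, so forward closure forces $x_{k+1}\in S$. Hence every vertex reachable from $a$ lies in $S$. Since $b\notin S$, no path from $a$ to $b$ exists.

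For the ``only if'' direction, assume $b$ is not reachable from $a$, and take $S=\Reach_H(a)$, the set of vertices reachable by directed paths, including the length-zero path. Then $a\in S$, and $b\notin S$ by hypothesis. To check forward closure, let $(u,v)$ be an edge with $u\in S$. Appending this edge to a path from $a$ to $u$ gives a path from $a$ to $v$, so $v\in S$. Therefore no edge leaves $S$.

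I do not expect a genuine obstacle here. The only points needing care are conventions. First, reachability must include the length-zero path, so that $a\in\Reach_H(a)$; the case $a=b$ is then consistent, since both sides are false. Second, forward closure must be stated for edges of the Hasse DAG, matching the definition just given. Acyclicity is not actually needed, so the argument works for any finite digraph.
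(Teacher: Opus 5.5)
Your proposal is correct and follows the paper's own proof. For the ``if'' direction, the paper argues that a path starting at $a$ cannot leave a forward-closed $S$; you spell this out by induction on path length, which the paper only does in the subsequent minimality corollary. For the ``only if'' direction, you use the same witness $S=\Reach_H(a)$, and your remarks that the length-zero convention handles $a=b$ and that acyclicity is never used are accurate.
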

        \begin{proof}
            If such an $S$ exists, a directed path starting at $a\in S$ cannot leave $S$, because every traversed edge remains inside a forward-closed set. It therefore cannot reach $b\notin S$.
        
            Conversely, suppose $b$ is not reachable from $a$ and take $S=\Reach_H(a)$. Then $a\in S$ and $b\notin S$. If an edge $(x,y)$ left $S$, then $x$ would be reachable from $a$ and the additional edge would make $y$ reachable as well, contradicting $y\notin S$. Hence $S$ is forward closed.
        \end{proof}
    
        For a nonreachable pair define the canonical witness-size statistic directly by
        \[
            \nu_P^-(a,b)=|\Reach_H(a)|.
        \]
    
        \begin{corollary}[Minimality of the canonical negative witness]\label{supp:negative-min}
            If $b$ is not reachable from $a$, then $\Reach_H(a)$ is the unique inclusion-minimal forward-closed set containing $a$ and excluding $b$. It also has minimum cardinality among such sets. Thus the direct definition of $\nu_P^-(a,b)$ equals the corresponding minimum, and for fixed $H$ and $a$ it is independent of the choice of nonreachable $b$.
        \end{corollary}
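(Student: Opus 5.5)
The plan is to derive everything from Lemma~\ref{supp:negative} together with a single closure observation: every forward-closed set containing $a$ already contains all of $\Reach_H(a)$.

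\textbf{Step 1: containment.} Let $S$ be forward closed with $a\in S$. I will show $\Reach_H(a)\subseteq S$ by induction on path length. The length-zero path gives $a\in S$. If $x_0=a,x_1,\ldots,x_k$ is a directed path with $x_{k-1}\in S$, then the edge $(x_{k-1},x_k)$ has its tail in $S$, so forward closure forces $x_k\in S$. Hence every vertex reachable from $a$ lies in $S$.

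\textbf{Step 2: minimality and uniqueness.} Since $b$ is not reachable from $a$, the proof of Lemma~\ref{supp:negative} shows that $\Reach_H(a)$ is forward closed, contains $a$, and excludes $b$. So it is a member of the family $\mathcal{F}_{a,b}$ of forward-closed sets containing $a$ and excluding $b$. By Step~1, it is contained in every member of $\mathcal{F}_{a,b}$, so it is the least element of that family. A least element is the unique inclusion-minimal element: if $S\in\mathcal{F}_{a,b}$ is inclusion-minimal, then $\Reach_H(a)\subseteq S$, and minimality of $S$ forces equality.

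\textbf{Step 3: cardinality and independence of $b$.} Containment gives $|\Reach_H(a)|\le|S|$ for every $S\in\mathcal{F}_{a,b}$. Therefore the minimum cardinality over $\mathcal{F}_{a,b}$ is attained and equals $|\Reach_H(a)|=\nu_P^-(a,b)$, so the direct definition agrees with the minimum. The set $\Reach_H(a)$ depends only on $H$ and $a$. Hence the minimum is the same for every $b$ not reachable from $a$.

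No step is a genuine obstacle. The only point that needs care is stating the induction in Step~1 explicitly, including the length-zero base case, since it is what justifies the word ``unique.''
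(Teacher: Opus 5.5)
Your proof is correct and follows essentially the same route as the paper's: Lemma~\ref{supp:negative} makes $\Reach_H(a)$ an eligible separator, and induction on path length shows it is contained in every forward-closed set containing $a$, which gives uniqueness of the inclusion-minimal set, minimum cardinality, and independence of $b$. You spell out a little more explicitly why a least element is the unique inclusion-minimal one and why the value does not depend on $b$, but the argument is the same.
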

        \begin{proof}
            Lemma~\ref{supp:negative} shows that $\Reach_H(a)$ is an eligible separator. Let $S$ be any forward-closed set containing $a$. We prove by induction on path length that every vertex reachable from $a$ lies in $S$. The base vertex $a$ lies in $S$. If a path reaches $y$ through an edge $(x,y)$ and the induction hypothesis gives $x\in S$, forward closure gives $y\in S$. Thus $\Reach_H(a)\subseteq S$ for every eligible separator. It is therefore the unique inclusion-minimal one and has minimum cardinality.
        \end{proof}
    
    \section{Assumptions and Dependency Summary}
        The completion and teaching results depend on Lemma~S3's one-edge closure formula. The trichotomy assumes all posets on a fixed finite universe and a satisfiable mixed-label prompt. The deterministic illustration is an exhaustive enumeration for the specifically defined uniform distribution over four-element targets, prompt subsets, and unobserved queries; no theorem depends on that illustration. The teaching theorem uses the same known universe and arbitrary positive and negative ordered-pair labels. Corollary~S11 concerns exact prompt-dependent monotone-coordinate decoding, and the certificate results concern the stated path and forward-closed witness systems.
    
        Version spaces, teaching dimension, Hasse reachability, and Dushnik--Miller dimension are classical \cite{mitchell1977version,goldman1995teaching,dushnik1941posets,trotter1992dimension}. The new synthesis separates prompt entailment, the open-world teaching surcharge, structural profile coordinates, certificate size, and the exact capacity of a specified coordinate-decoder class.

\end{document}